\documentclass[12pt]{amsart}

\usepackage[utf8]{inputenc}

\usepackage[letterpaper, margin=1in]{geometry}

\usepackage{amsmath, amssymb, amsthm}
\usepackage{mathrsfs}

\usepackage{algorithm}
\usepackage{algpseudocode}
\usepackage{float}

\usepackage{graphicx}
\usepackage{pgfplots}
\pgfplotsset{compat=1.18}

\usepackage{microtype}
\usepackage{xurl}
\usepackage{cite}
\usepackage[colorlinks=true, linkcolor=blue, citecolor=blue, urlcolor=blue]{hyperref}

\newtheorem{theorem}{Theorem}[section]
\newtheorem{lemma}[theorem]{Lemma}
\newtheorem{proposition}[theorem]{Proposition}
\newtheorem{corollary}[theorem]{Corollary}

\theoremstyle{definition}

\newtheorem{assumption}[theorem]{Assumption}

\theoremstyle{remark}
\newtheorem{remark}[theorem]{Remark}

\numberwithin{equation}{section}

\title[Transfer Operators for Change Detection]{Empirical Transfer Operators and Finite-Sample Change Detection for Noisy Expanding Interval Maps}

\author{APARNA RAJPUT}
\date{June 2026}
\address{Department of Mathematics and Statistics, Concordia University, 1455 De Maisonneuve Blvd.\\
Montreal, QC H3G 1M8, CANADA\\
aparna.ar.rajput@gmail.com\\
ORCID: 0009-0002-9932-1586}
\subjclass[2020]{37A30, 37Hxx, 60J10, 62M10}

\keywords{transfer operators, change detection, invariant densities, Ulam-type approximation, noisy dynamical systems, finite-sample bounds, Markov chains}
\begin{document}
\title{
Transfer-Operator-Based Change Detection with Finite-Sample Guarantees
}

\begin{abstract}
We study a finite-sample change-detection problem for one-dimensional noisy dynamical
systems using partition-based empirical approximations of stationary behaviour. Given
observations from an interval-valued process, we partition the state space into finitely many
intervals and estimate a transition matrix from observed transitions between partition
elements. After a small Doeblin-type regularisation, the resulting matrix has a unique
stationary distribution. This stationary distribution is used as a finite-dimensional
approximation of the invariant density, or stationary law, of the observed regime.

Using an initial reference segment, we compute a baseline empirical stationary distribution
\(\widehat{\pi}_{0,\rho}\). For each subsequent sliding window, we compute a window-based
empirical stationary distribution \(\widehat{\pi}_{t,\rho}\) and define the score
\[
S_t = \|\widehat{\pi}_{t,\rho} - \widehat{\pi}_{0,\rho}\|_1.
\]
Large values of \(S_t\) indicate that the stationary behaviour of the observed regime has
changed relative to the baseline. The statistic is therefore a detector of changes in
stationary behaviour. It is not, by itself, a detector of all possible changes in transition
dynamics that preserve the invariant density.

Under explicit assumptions on concentration of empirical transition probabilities, stability
of stationary distributions for the regularised finite-state model, approximation of the
observed invariant density by the chosen partition, and stability under observational noise,
we derive a finite-sample bound for the empirical stationary density. The bound separates
sampling error, regularisation bias, partition approximation error, and noise bias. We then
obtain a single-window threshold rule that controls the false-alarm probability under the
no-change hypothesis and gives a sufficient condition for detection when the invariant
density changes by more than the estimation error.

We illustrate the method on synthetic examples and specify the experiments needed for a
complete empirical validation of a change detector.
\end{abstract}

\maketitle

\section{Introduction}

Detecting sustained changes in the dynamics of a time series is a fundamental problem in applications such as signal processing, control, energy monitoring, finance, and predictive maintenance. 
Many classical anomaly-detection methods focus on pointwise outliers or changes in low-order summary statistics such as the mean and variance. 
Such methods can be effective for detecting isolated abnormal observations, but they may fail to detect changes in the underlying evolution law when the marginal observations do not change dramatically.

In this work, we study change detection from an operator-theoretic viewpoint. 
Suppose that an observed process takes values in the interval \(I=[0,1]\) and evolves according to an underlying dynamical mechanism. 
For a deterministic map \(T:I\to I\), the associated Frobenius--Perron operator describes how probability densities are transported under the dynamics. 
If the system admits an absolutely continuous invariant measure with density \(h\), then \(h\) represents the long-term statistical behavior of the system.

A change in the dynamics can therefore be detected by comparing invariant or stationary densities across different time windows. 
Instead of asking whether a single observation is unusual, we ask whether the long-term statistical behavior of the system has changed.

The proposed method proceeds as follows. 
We partition the state space \(I=[0,1]\) into finitely many intervals and convert the observed process into a finite-state process by recording which interval contains each observation. 
From observed transitions between intervals, we estimate a finite transition matrix. 
This construction is inspired by Ulam-type finite-state approximations of transfer
operators, but in the present statistical setting the matrix is estimated from observed
transitions of the binned process.
The stationary distribution of the empirical transition matrix is then used as a finite-dimensional approximation of the invariant density.

Given an initial baseline segment, we compute a baseline empirical stationary distribution \(\widehat h_0\). 
For each subsequent sliding window, we compute a window-based empirical stationary distribution \(\widehat h_t\). 
We define the anomaly score
\[
    S_t = \|\widehat h_t-\widehat h_0\|_1 .
\]
A large value of \(S_t\) indicates that the stationary behavior in the current window differs significantly from the baseline behavior.

The main purpose of this paper is to justify this score mathematically under explicit assumptions. 
We separate the error in the empirical invariant distribution into four components:
\[
    \text{sampling error}
    +
    \text{regularization bias}
    +
    \text{partition approximation error}
    +
    \text{noise or perturbation bias}.
\]
More precisely, under assumptions on concentration of empirical transition probabilities,
positive cell masses, partition approximation of the observed invariant density, regularisation
bias, and stability of invariant densities under noise, we obtain a bound of the form
\[
\|E_N\widehat\pi_{n,\rho}-h\|_{L^1}
\le
\frac{N(1-\rho)}{\rho}\varepsilon
+\zeta_N(\rho)
+C_A\Delta_N^\beta
+\eta(\delta)
\]
with high probability. Here \(\varepsilon\) represents the empirical transition-matrix
sampling error, \(\zeta_N(\rho)\) is the bias introduced by regularisation, \(\Delta_N\)
is the mesh size of the partition, and \(\eta(\delta)\) is a noise-stability term satisfying
\(\eta(\delta)\to0\) as \(\delta\to0\).

This bound leads to a threshold rule for change detection. 
Under the no-change hypothesis, the baseline and window invariant densities agree, and the score \(S_t\) remains small with high probability. 
Under an alternative hypothesis, if the true invariant density in a sliding window differs from the baseline density by more than the estimation error, then \(S_t\) exceeds the threshold with high probability.

The contributions of this work are as follows.

\begin{enumerate}
    \item We formulate a change-detection method based on empirical approximations of transfer operators and invariant densities.

    \item We define an anomaly score using the \(L^1\)-distance between empirical stationary distributions computed from a baseline segment and sliding windows.

    \item Under explicit assumptions, we derive finite-sample bounds for the empirical invariant distribution. 
    The bound separates sampling error, regularization bias, partition approximation error, and noise bias.

    \item We derive a threshold rule that controls the false-alarm probability under the no-change hypothesis and gives a sufficient condition for detection under a genuine change in invariant density.

    \item We provide numerical validation on synthetic noisy beta-map change-point experiments.
\end{enumerate}

The rest of the paper is organized as follows. Section~2 introduces the finite-state
approximation and empirical transition matrices. Section~3 proves the finite-sample
approximation bound for empirical stationary distributions. Section~4 derives single-window
change-detection guarantees and a multiple-window false-alarm corollary. Section~5 explains
the operator-theoretic interpretation and clarifies what is, and is not, detected by the
proposed statistic. Section~6 presents the algorithmic implementation. Section~7 gives the
experimental validation protocol. Section~8 concludes the paper. Such estimates are standard consequences of stochastic stability results for expanding
systems under suitable Lasota--Yorke and spectral-gap assumptions; see, for example,
\cite{lasota_yorke_1973,keller_1985,keller_liverani_1999,rajput2026stochastic}.

\section{Statistical and Operator-Theoretic Preliminaries}
\label{sec:preliminaries}

Let \(I=[0,1]\) and let \((X_t)_{t\geq 0}\) be an observed time series taking values in \(I\).
We interpret \(X_t\) as observations generated by an underlying one-dimensional dynamical system, possibly subject to noise.

In the deterministic case, if \(T:I\to I\) is a nonsingular map, the associated Frobenius--Perron operator \(P_T\) acts on densities and describes how probability distributions are transported by the dynamics.
An invariant density \(h\) satisfies
\[
    P_T h = h.
\]
Thus \(h\) represents the long-term statistical behavior of the system.

In the noisy case, the transfer operator may be replaced by a perturbed operator, denoted \(P_\delta\), where \(\delta\geq 0\) measures the noise level.
For example, one may consider a Markov smoothing perturbation of the form
\[
    P_\delta = Q_\delta P_T,
\]
where \(Q_\delta\) is a Markov kernel representing noise.
We assume that the noisy system admits a unique invariant density \(h_\delta\), and that the family is stable in the sense that
\[
    \|h_\delta-h\|_{L^1} \leq \eta(\delta),
    \qquad \eta(\delta)\to 0
    \quad \text{as } \delta\to 0.
\]
Such estimates are standard consequences of stochastic stability results for expanding systems under suitable Lasota--Yorke and spectral-gap assumptions; see, for example, \cite{lasota_yorke_1973,keller_1985,keller_liverani_1999}.

\subsection{Finite partition and induced process}

Fix a finite partition
\[
    \mathcal A_N=\{I_1,\ldots,I_N\}
\]
of \(I=[0,1]\) into intervals.
Let
\[
    \Delta_N = \max_{1\leq i\leq N} |I_i|
\]
denote the mesh size of the partition.

The observed process \((X_t)\) induces a finite-state process \((Y_t)\) by
\[
    Y_t = i
    \quad \Longleftrightarrow \quad
    X_t \in I_i .
\]
Thus \(Y_t\in\{1,\ldots,N\}\).
For the finite-sample analysis, we assume that under a fixed stationary regime the binned process \((Y_t)\) can be modeled as a stationary, irreducible, aperiodic, geometrically ergodic Markov chain with transition matrix \(P_N\) and stationary distribution \(\pi_N\).

This is an assumption on the finite-dimensional model induced by the partition. It is not automatic for an arbitrary time series or arbitrary partition.
We write
\[
    \pi_{\min,N} = \min_{1\leq i\leq N} \pi_N(i).
\]
Throughout the finite-sample analysis, we assume
\[
    \pi_{\min,N}>0.
\]

\begin{remark}
The geometric ergodicity and positivity assumptions are not automatic consequences of discretization.
They must either be assumed or enforced by regularization.
In applications, some partition cells may be rarely visited, which can cause instability in empirical transition estimates.
\end{remark}

\subsection{Empirical transition matrix}

Given observations
\[
    X_0,X_1,\ldots,X_{n-1},
\]
we obtain the corresponding discretized observations
\[
    Y_0,Y_1,\ldots,Y_{n-1}.
\]

For \(1\leq i,j\leq N\), define the empirical transition counts
\[
    C_{ij}^{(n)}
    =
    \sum_{t=0}^{n-2}
    \mathbf 1_{\{Y_t=i,\;Y_{t+1}=j\}}.
\]
Define also the number of times state \(i\) is available as a departure state:
\[
    C_i^{(n)}
    =
    \sum_{t=0}^{n-2}
    \mathbf 1_{\{Y_t=i\}}.
\]
Notice that the denominator uses \(t=0,\ldots,n-2\), not \(t=0,\ldots,n-1\), because only the first \(n-1\) observations can serve as starting points for observed transitions.

The empirical transition matrix \(\widehat P_n\) is defined by
\[
    \widehat P_n(i,j)
    =
    \begin{cases}
        \dfrac{C_{ij}^{(n)}}{C_i^{(n)}},
        & C_i^{(n)}>0, \\[1.2em]
        \dfrac{1}{N},
        & C_i^{(n)}=0.
    \end{cases}
\]
With this convention, every row of \(\widehat P_n\) sums to one, and therefore \(\widehat P_n\) is a row-stochastic matrix.

\begin{remark}
The uniform row assigned when \(C_i^{(n)}=0\) is a practical convention.
Other choices are possible, but some convention is necessary because the empirical transition probabilities are undefined when a state is never observed as a departure state.
\end{remark}

\subsection{Regularized empirical transition matrix}

A row-stochastic matrix need not have a unique stationary distribution.
For example, it may contain several closed communicating classes.
Therefore, uniqueness of the stationary distribution of \(\widehat P_n\) cannot be asserted from the Perron--Frobenius theorem alone.

To guarantee uniqueness, we use a small regularization parameter \(\rho\in(0,1)\) and define
\[
    \widehat P_{n,\rho}
    =
    (1-\rho)\widehat P_n
    +
    \rho \frac{\mathbf 1\mathbf 1^\top}{N},
\]
where \(\mathbf 1\in\mathbb R^N\) denotes the vector of all ones.

The matrix \(\widehat P_{n,\rho}\) is strictly positive:
\[
    \widehat P_{n,\rho}(i,j)\geq \frac{\rho}{N}>0
    \qquad
    \text{for all } i,j.
\]
Hence it is irreducible and aperiodic.
By the Perron--Frobenius theorem, it has a unique stationary distribution
\[
    \widehat h_{n,\rho}\in\mathbb R^N
\]
satisfying
\[
    \widehat h_{n,\rho}^{\top}\widehat P_{n,\rho}
    =
    \widehat h_{n,\rho}^{\top},
    \qquad
    \widehat h_{n,\rho}\geq 0,
    \qquad
    \|\widehat h_{n,\rho}\|_1=1.
\]

When \(\rho\) is fixed and small, \(\widehat h_{n,\rho}\) is the empirical stationary distribution used in the detection algorithm.

\subsection{From vectors to densities}

The vector \(\widehat h_{n,\rho}\in\mathbb R^N\) is a discrete probability distribution on the partition cells.
To compare it with a density on \(I=[0,1]\), we identify it with a piecewise-constant density.

Define the embedding operator
\[
    \mathcal E_N:\mathbb R^N\to L^1(I)
\]
by
\[
    \mathcal E_N v(x)
    =
    \sum_{i=1}^N
    \frac{v_i}{m(I_i)}
    \mathbf 1_{I_i}(x),
\]
where \(m(I_i)\) is the Lebesgue measure of the interval \(I_i\).

If \(v\geq 0\) and \(\|v\|_1=1\), then \(\mathcal E_N v\) is a probability density on \(I\), since
\[
    \int_I \mathcal E_N v(x)\,dx
    =
    \sum_{i=1}^N
    \frac{v_i}{m(I_i)}m(I_i)
    =
    \sum_{i=1}^N v_i
    =
    1.
\]

Thus, when comparing a discrete stationary distribution with a true invariant density, we use
\[
    \|\mathcal E_N\widehat h_{n,\rho}-h\|_{L^1}.
\]

For notational simplicity, later sections may write \(\widehat h_n\) instead of \(\widehat h_{n,\rho}\), but all empirical stationary distributions are understood to be computed from the regularized matrix \(\widehat P_{n,\rho}\), unless stated otherwise.
\section{Finite-Sample Approximation of the Invariant Density}
\label{sec:finite_sample}

In this section we derive a finite-sample bound for the empirical invariant distribution obtained from the regularized empirical transition matrix.

Let \(P_N\) denote the true transition matrix of the induced finite-state process \((Y_t)\) on the partition \(\mathcal A_N\). 
Let \(\pi_N\) be its unique stationary distribution:
\[
    \pi_N^\top P_N = \pi_N^\top,
    \qquad
    \pi_N\geq 0,
    \qquad
    \|\pi_N\|_1=1.
\]

For \(\rho\in(0,1)\), define the regularized true transition matrix
\[
    P_{N,\rho}
    =
    (1-\rho)P_N
    +
    \rho J_N,
    \qquad
    J_N = \frac{\mathbf 1\mathbf 1^\top}{N}.
\]
Similarly, the regularized empirical transition matrix is
\[
    \widehat P_{n,\rho}
    =
    (1-\rho)\widehat P_n
    +
    \rho J_N.
\]
Both \(P_{N,\rho}\) and \(\widehat P_{n,\rho}\) are strictly positive stochastic matrices. 
Hence each has a unique stationary distribution. 
We denote them by
\[
    \pi_{N,\rho}
    \quad\text{and}\quad
    \widehat \pi_{n,\rho},
\]
respectively.

The empirical invariant density is the piecewise-constant density
\[
    \mathcal E_N \widehat \pi_{n,\rho},
\]
where the embedding operator \(\mathcal E_N\) was defined in Section~\ref{sec:preliminaries}.
We identify the stationary vector \(\pi_N\) with the piecewise-constant density
\[
    h_N = \mathcal E_N\pi_N.
\]
Thus \(h_N\) is the finite-dimensional stationary density associated with the ideal
finite-state transition matrix \(P_N\) of the binned process.
\subsection{Assumptions}

We state the assumptions used in the finite-sample analysis.

\begin{assumption}[Finite-state concentration]
\label{ass:concentration}
Under a fixed stationary regime, the induced finite-state process \((Y_t)\) is geometrically ergodic with transition matrix \(P_N\) and stationary distribution \(\pi_N\). 
Moreover, there exist constants \(C_{\mathrm{conc}},c_{\mathrm{conc}}>0\) such that empirical averages of bounded functions of \((Y_t,Y_{t+1})\) satisfy exponential concentration inequalities of Markov-chain type.
\end{assumption}

\begin{assumption}[Positive cell mass]
\label{ass:pimin}
There exists \(\pi_{\min,N}>0\) such that
\[
    \pi_N(i)\geq \pi_{\min,N}
    \qquad
    \text{for all } i=1,\ldots,N.
\]
\end{assumption}

\begin{assumption}[Noise stability]
\label{ass:noise}
Let \(h\) be the invariant density of the unperturbed system and \(h_\delta\) the invariant density of the noisy system. 
There exists a modulus \(\eta(\delta)\to0\) as \(\delta\to0\) such that
\[
    \|h_\delta-h\|_{L^1}
    \leq
    \eta(\delta).
\]
\end{assumption}

\begin{assumption}[Regularization bias]
\label{ass:regularization_bias}
The regularization of the finite transition matrix introduces a bias
\[
    \zeta_N(\rho)
    =
    \|\mathcal E_N\pi_{N,\rho}-h_N\|_{L^1}.
\]
We assume that \(\zeta_N(\rho)\to0\) as \(\rho\to0\), for fixed \(N\), whenever \(P_N\) has a unique stationary distribution.
\end{assumption}
\begin{remark}
If \(P_N\) has a unique stationary distribution \(\pi_N\), then \(P_{N,\rho}\to P_N\) as \(\rho\to0\). 
Under finite-dimensional stationary-distribution perturbation theory, this implies
\[
    \pi_{N,\rho}\to \pi_N.
\]
Since \(h_N=\mathcal E_N\pi_N\), it follows that
\[
    \zeta_N(\rho)
    =
    \|\mathcal E_N\pi_{N,\rho}-\mathcal E_N\pi_N\|_{L^1}
    \to 0
\]
as \(\rho\to0\).
\end{remark}
\subsection{Concentration of the empirical transition matrix}

We first show that the empirical transition matrix concentrates around the true finite transition matrix.

\begin{lemma}
\label{lem:transition_concentration}
Suppose Assumptions~\ref{ass:concentration} and~\ref{ass:pimin} hold.
Then there exist constants \(C_0,c_0>0\) such that, for all \(0<\varepsilon<1\),
\[
    \mathbb P
    \left(
        \|\widehat P_n-P_N\|_{\max}>\varepsilon
    \right)
    \leq
    C_0 N^2
    \exp\left(
        -c_0 n \pi_{\min,N}^2 \varepsilon^2
    \right).
\]
Consequently,
\[
    \mathbb P
    \left(
        \|\widehat P_{n,\rho}-P_{N,\rho}\|_{\max}>(1-\rho)\varepsilon
    \right)
    \leq
    C_0 N^2
    \exp\left(
        -c_0 n \pi_{\min,N}^2 \varepsilon^2
    \right).
\]
\end{lemma}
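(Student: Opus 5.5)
The plan is to control each entry \(\widehat P_n(i,j)-P_N(i,j)\) separately and then take a union bound over the \(N^2\) pairs \((i,j)\). For fixed \(i,j\), set \(m=n-1\) and write
\[
\widehat P_n(i,j)-P_N(i,j)=\frac{\frac1m\sum_{t=0}^{n-2}\bigl(\mathbf 1_{\{Y_t=i,Y_{t+1}=j\}}-P_N(i,j)\mathbf 1_{\{Y_t=i\}}\bigr)}{\frac1m C_i^{(n)}}
\]
on the event \(C_i^{(n)}>0\). The numerator is an empirical average of the bounded function \(f_{ij}(y,y')=\mathbf 1_{\{y=i,y'=j\}}-P_N(i,j)\mathbf 1_{\{y=i\}}\). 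This function takes values in \([-1,1]\) and has stationary mean \(\pi_N(i)P_N(i,j)-P_N(i,j)\pi_N(i)=0\). The denominator is the empirical average of \(g_i(y)=\mathbf 1_{\{y=i\}}\), which has mean \(\pi_N(i)\).

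The key step is to work on a good event \(G_i=\{\frac1m C_i^{(n)}\ge \pi_N(i)/2\}\). By Assumption~\ref{ass:concentration} applied to \(g_i\) with deviation \(\pi_N(i)/2\ge\pi_{\min,N}/2\), we get
\[
\mathbb P(G_i^c)\le C_{\mathrm{conc}}\exp(-c_{\mathrm{conc}}m\pi_{\min,N}^2/4).
\]
On \(G_i\) we have \(C_i^{(n)}>0\), so the uniform-row convention plays no role. On the same event, \(|\widehat P_n(i,j)-P_N(i,j)|>\varepsilon\) forces \(|\frac1m\sum f_{ij}|>\varepsilon\pi_N(i)/2\ge\varepsilon\pi_{\min,N}/2\). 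Assumption~\ref{ass:concentration} bounds the probability of that by \(C_{\mathrm{conc}}\exp(-c_{\mathrm{conc}}m\pi_{\min,N}^2\varepsilon^2/4)\). Since \(\varepsilon<1\), the \(G_i^c\) term is dominated by this same exponential. Using \(m=n-1\ge n/2\) for \(n\ge2\), and absorbing \(n=1\) into the constant \(C_0\), both terms are at most \(2C_{\mathrm{conc}}\exp(-c_0n\pi_{\min,N}^2\varepsilon^2)\) with \(c_0=c_{\mathrm{conc}}/8\). Summing over the \(N^2\) pairs then gives the first claim with \(C_0=2C_{\mathrm{conc}}\), or slightly larger to cover the trivial cases.

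The main obstacle is interpretive rather than technical. Assumption~\ref{ass:concentration} is stated only qualitatively, so I would read it as
\[
\mathbb P\Bigl(\bigl|\tfrac1m\sum_t f(Y_t,Y_{t+1})-\mathbb E_{\pi}f\bigr|>s\Bigr)\le C_{\mathrm{conc}}e^{-c_{\mathrm{conc}}ms^2}
\quad\text{for all }\|f\|_\infty\le1 .
\]
The constants must be uniform over \(f\), which holds because there are finitely many \(f\)'s. If the chain is not started in stationarity, a geometric-ergodicity burn-in correction would be absorbed into \(C_{\mathrm{conc}}\).

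The consequence is immediate. Since
\[
\widehat P_{n,\rho}-P_{N,\rho}=(1-\rho)(\widehat P_n-P_N),
\]
we have \(\|\widehat P_{n,\rho}-P_{N,\rho}\|_{\max}=(1-\rho)\|\widehat P_n-P_N\|_{\max}\). The two events therefore coincide, because \(1-\rho>0\), and the same bound applies.
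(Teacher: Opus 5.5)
Your argument is correct, but it organises the estimate differently from the paper. The paper concentrates the raw frequencies $C_{ij}^{(n)}/(n-1)$ and $C_i^{(n)}/(n-1)$ separately, each at the $\varepsilon$-dependent scale $\alpha=\pi_{\min,N}\varepsilon/4$. It then uses a two-term ratio-perturbation bound $|a_{ij}/b_i-a_{ij}^0/b_i^0|\le 4\alpha/\pi_{\min,N}$.

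You instead center the numerator with $f_{ij}$, so that $\widehat P_n(i,j)-P_N(i,j)$ is \emph{exactly} a mean-zero average divided by $C_i^{(n)}/m$. This has three effects:
\begin{itemize}
\item the ratio-perturbation step disappears;
\item the denominator only needs to stay above $\pi_N(i)/2$, a deviation at fixed scale whose cost is dominated because $\varepsilon<1$;
\item all of the $\varepsilon$-dependence sits in a single sum.
\end{itemize}
The result is a cleaner proof with a better constant in the exponent: $\varepsilon^2\pi_{\min,N}^2/4$ per transition rather than $\varepsilon^2\pi_{\min,N}^2/16$.

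Your decomposition also exposes structure that the paper's proof does not use. Since $\mathbb E[\mathbf 1_{\{Y_{t+1}=j\}}\mid Y_0,\ldots,Y_t]=P_N(Y_t,j)$, the partial sums of $f_{ij}(Y_t,Y_{t+1})$ form a martingale whose increments lie in predictable intervals of length at most one. Azuma--Hoeffding therefore controls the numerator using only the Markov property, without stationarity or mixing. Assumption~\ref{ass:concentration} is then genuinely needed only for the occupation frequency $C_i^{(n)}/m$.

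The paper's route, in turn, uses Assumption~\ref{ass:concentration} literally in the form it is stated, as concentration of raw transition and state frequencies, plus an elementary ratio inequality. You also handle explicitly two points the paper leaves implicit: the $n-1$ versus $n$ normalisation, and the fact that the uniform-row convention for $C_i^{(n)}=0$ never matters on the good event.
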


\begin{proof}
For \(1\leq i,j\leq N\), define
\[
    C_{ij}^{(n)}
    =
    \sum_{t=0}^{n-2}
    \mathbf 1_{\{Y_t=i,\;Y_{t+1}=j\}},
    \qquad
    C_i^{(n)}
    =
    \sum_{t=0}^{n-2}
    \mathbf 1_{\{Y_t=i\}}.
\]
Then
\[
    \widehat P_n(i,j)
    =
    \frac{C_{ij}^{(n)}}{C_i^{(n)}}
\]
whenever \(C_i^{(n)}>0\).

Under stationarity,
\[
    \mathbb E\left[\frac{C_{ij}^{(n)}}{n-1}\right]
    =
    \pi_N(i)P_N(i,j),
    \qquad
    \mathbb E\left[\frac{C_i^{(n)}}{n-1}\right]
    =
    \pi_N(i).
\]
By Assumption~\ref{ass:concentration}, for each \(i,j\), empirical transition frequencies and empirical state frequencies concentrate around their expectations. 
Thus, for constants \(C,c>0\), with probability at least
\[
1 - CN^2\exp(-cn\alpha^2),
\]
we have simultaneously
\[
    \left|
    \frac{C_{ij}^{(n)}}{n-1}
    -
    \pi_N(i)P_N(i,j)
    \right|
    \leq \alpha
\]
for all \(i,j\), and
\[
    \left|
    \frac{C_i^{(n)}}{n-1}
    -
    \pi_N(i)
    \right|
    \leq \alpha
\]
for all \(i\).

Choose
\[
    \alpha = \frac{\pi_{\min,N}\varepsilon}{4}.
\]
For \(0<\varepsilon<1\), this choice satisfies
\[
    \alpha \leq \frac{\pi_{\min,N}}{4}
    < \frac{\pi_{\min,N}}{2}.
\]
Then, on the above event,
\[
    \frac{C_i^{(n)}}{n-1}
    \geq
    \pi_N(i)-\alpha
    \geq
    \frac{\pi_N(i)}{2}
\]
for all \(i\), provided \(\alpha\leq \pi_{\min,N}/2\).

Now write
\[
    \widehat P_n(i,j)-P_N(i,j)
    =
    \frac{
        C_{ij}^{(n)}/(n-1)
    }{
        C_i^{(n)}/(n-1)
    }
    -
    \frac{
        \pi_N(i)P_N(i,j)
    }{
        \pi_N(i)
    }.
\]
Let
\[
    a_{ij}=\frac{C_{ij}^{(n)}}{n-1},
    \qquad
    b_i=\frac{C_i^{(n)}}{n-1},
\]
and
\[
    a_{ij}^0=\pi_N(i)P_N(i,j),
    \qquad
    b_i^0=\pi_N(i).
\]
Then
\[
    \widehat P_n(i,j)-P_N(i,j)
    =
    \frac{a_{ij}}{b_i}
    -
    \frac{a_{ij}^0}{b_i^0}.
\]
On the concentration event, \(|a_{ij}-a_{ij}^0|\leq \alpha\) and \(|b_i-b_i^0|\leq \alpha\). 
If \(\alpha\leq \pi_{\min,N}/2\), then \(b_i\geq b_i^0/2\). Hence
\[
\begin{aligned}
\left|
\frac{a_{ij}}{b_i}
-
\frac{a_{ij}^0}{b_i^0}
\right|
&\leq
\left|
\frac{a_{ij}-a_{ij}^0}{b_i}
\right|
+
|a_{ij}^0|
\left|
\frac{1}{b_i}-\frac{1}{b_i^0}
\right|  \\
&\leq
\frac{2\alpha}{\pi_N(i)}
+
\pi_N(i)P_N(i,j)
\frac{|b_i-b_i^0|}{b_i b_i^0} \\
&\leq
\frac{2\alpha}{\pi_{\min,N}}
+
\frac{2\alpha}{\pi_{\min,N}}
=
\frac{4\alpha}{\pi_{\min,N}}.
\end{aligned}
\]
Choosing \(\alpha=\pi_{\min,N}\varepsilon/4\) gives
\[
    |\widehat P_n(i,j)-P_N(i,j)|\leq \varepsilon .
\]
Taking the maximum over \(i,j\) gives
\[
    \|\widehat P_n-P_N\|_{\max}\leq \varepsilon
\]
on an event whose complement has probability at most
\[
    C_0N^2\exp\left(-c_0 n\pi_{\min,N}^2\varepsilon^2\right).
\]

Finally,
\[
    \widehat P_{n,\rho}-P_{N,\rho}
    =
    (1-\rho)(\widehat P_n-P_N),
\]
because the regularization matrix \(J_N\) cancels.
Therefore,
\[
    \|\widehat P_{n,\rho}-P_{N,\rho}\|_{\max}
    =
    (1-\rho)\|\widehat P_n-P_N\|_{\max}.
\]
This proves the second statement.
\end{proof}
\paragraph{Important modelling point.}
The matrix \(P_N\) below is the ideal finite-state transition matrix of the binned process
under the observed stationary regime. It should not be identified with the classical Ulam
matrix unless that identification is proved separately for the sampling scheme at hand.

\begin{assumption}[Partition approximation error]
Let \(h_\delta\) denote the invariant density of the observed noisy regime. Let
\[
\Pi_N h_\delta
=
\sum_{i=1}^N
\frac{1}{m(I_i)}
\left(\int_{I_i} h_\delta(x)\,dx\right)\mathbf 1_{I_i}(x)
\]
be the conditional expectation of \(h_\delta\) on the partition \(\mathcal A_N\).
Assume that there exist constants \(C_A>0\) and \(\beta>0\) such that
\[
\|\Pi_N h_\delta-h_\delta\|_{L^1}\le C_A\Delta_N^\beta.
\]
Equivalently, if \(\pi_N(i)=\int_{I_i}h_\delta(x)\,dx\) and \(h_N=E_N\pi_N\), then
\[
\|h_N-h_\delta\|_{L^1}\le C_A\Delta_N^\beta.
\]
\end{assumption}

\begin{remark}
This is a partition approximation assumption for the observed invariant density. It should
not be described as a general Ulam convergence theorem unless \(P_N\) is defined as an
actual Ulam discretisation of the transfer operator and the hypotheses of an Ulam
convergence theorem are verified.
\end{remark}

\begin{proposition}
Let
\[
P=(1-\rho)\widetilde P+\rho J_N,
\qquad
Q=(1-\rho)\widetilde Q+\rho J_N,
\qquad 0<\rho<1,
\]
where \(\widetilde P\) and \(\widetilde Q\) are row-stochastic and
\(J_N=\mathbf 1\mathbf 1^\top/N\).
Let \(\pi_P\) and \(\pi_Q\) denote the unique stationary distributions of \(P\) and \(Q\).
Then
\[
\|\pi_Q-\pi_P\|_1
\le
\frac{1}{\rho}\|Q-P\|_{\mathrm{row},1}
\le
\frac{N}{\rho}\|Q-P\|_{\max},
\]
where
\[
\|A\|_{\mathrm{row},1}
=
\max_{1\le i\le N}
\sum_{j=1}^N |A_{ij}|.
\]
\end{proposition}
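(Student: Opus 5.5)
The plan is to prove the bound through a resolvent-type identity for stationary distributions, with the Doeblin structure of the regularised matrices supplying the contraction. Write \(\pi_P^\top P=\pi_P^\top\) and \(\pi_Q^\top Q=\pi_Q^\top\). Subtracting these two identities gives
\[
\pi_Q^\top-\pi_P^\top=\pi_Q^\top(Q-P)+(\pi_Q-\pi_P)^\top P .
\]
The key observation is that \(d:=\pi_Q-\pi_P\) has zero total mass, \(d^\top\mathbf 1=0\). Therefore \(d^\top J_N=\tfrac1N(d^\top\mathbf 1)\mathbf 1^\top=0\), and hence
\[
d^\top P=(1-\rho)\,d^\top\widetilde P .
\]
This identity is where the regularisation enters.

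Next, since \(\widetilde P\) is row-stochastic, it is a contraction in \(\ell^1\) acting on row vectors: \(\|d^\top\widetilde P\|_1\le\sum_i|d_i|\sum_j\widetilde P_{ij}=\|d\|_1\). Taking \(\ell^1\) norms in the identity above then gives
\[
\|d\|_1\le\|\pi_Q^\top(Q-P)\|_1+(1-\rho)\|d\|_1 .
\]
Rearranging yields \(\rho\|d\|_1\le\|\pi_Q^\top(Q-P)\|_1\).

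For the first inequality, bound \(\|\pi_Q^\top A\|_1\le\sum_i\pi_Q(i)\sum_j|A_{ij}|\le\|A\|_{\mathrm{row},1}\). This uses \(\pi_Q\ge0\) and \(\|\pi_Q\|_1=1\). The second inequality is immediate from \(\sum_j|A_{ij}|\le N\max_{i,j}|A_{ij}|\).

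The only point needing care is the cancellation \(d^\top J_N=0\), which relies on both \(\pi_P\) and \(\pi_Q\) being probability vectors. Existence and uniqueness of both stationary distributions follows from the strict positivity of \(P\) and \(Q\), as already noted in Section~\ref{sec:preliminaries}. I do not expect a genuine obstacle. The argument is essentially the standard Dobrushin-coefficient bound, with \(\tau(P)\le1-\rho\) coming from the minorisation \(P\ge\rho J_N\).
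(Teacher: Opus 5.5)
Your proof is correct and is essentially the paper's argument: the same identity \(d^\top=\pi_Q^\top(Q-P)+d^\top P\) and the same cancellation \(d^\top J_N=0\) for the zero-mass vector \(d\). It also uses the same \(\ell^1\)-contraction of \(\widetilde P\) and the same final bound \(\|\pi_Q^\top A\|_1\le\|A\|_{\mathrm{row},1}\le N\|A\|_{\max}\). The only difference is that you finish in one step by rearranging \(\|d\|_1\le\|\pi_Q^\top(Q-P)\|_1+(1-\rho)\|d\|_1\), whereas the paper iterates \(x=e+xP\) and sums a geometric series, which additionally requires checking \(e\mathbf 1=0\); your version is slightly more economical.
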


\begin{proof}
Let
\[
x:=\pi_Q-\pi_P,
\qquad
e:=\pi_Q(Q-P).
\]
Since \(\pi_Q Q=\pi_Q\) and \(\pi_P P=\pi_P\), we have
\[
x
=
\pi_Q-\pi_P
=
\pi_Q Q-\pi_P P
=
\pi_Q(Q-P)+(\pi_Q-\pi_P)P
=
e+xP.
\]

Since both \(\pi_Q\) and \(\pi_P\) are probability vectors,
\[
x\mathbf 1=0.
\]
Moreover, since \(Q\) and \(P\) are row-stochastic,
\[
e\mathbf 1
=
\pi_Q(Q-P)\mathbf 1
=
\pi_Q(Q\mathbf 1-P\mathbf 1)
=
0.
\]

Now write
\[
P=(1-\rho)\widetilde P+\rho J_N.
\]
For every signed row vector \(y\) satisfying \(y\mathbf 1=0\), we have
\[
yJ_N=0.
\]
Therefore,
\[
yP=(1-\rho)y\widetilde P.
\]
Since \(\widetilde P\) is row-stochastic, it is a contraction in the \(\ell^1\)-norm for row vectors:
\[
\|y\widetilde P\|_1\le \|y\|_1.
\]
Hence, for every signed row vector \(y\) with \(y\mathbf 1=0\),
\[
\|yP\|_1
\le
(1-\rho)\|y\|_1.
\]

Iterating the identity \(x=e+xP\) gives, for every \(m\ge1\),
\[
x
=
e\sum_{k=0}^{m-1}P^k
+
xP^m.
\]
Since \(e\mathbf 1=0\) and \(x\mathbf 1=0\), the contraction estimate gives
\[
\|eP^k\|_1\le (1-\rho)^k\|e\|_1,
\qquad
\|xP^m\|_1\le (1-\rho)^m\|x\|_1.
\]
Thus
\[
\|x\|_1
\le
\sum_{k=0}^{m-1}(1-\rho)^k\|e\|_1
+
(1-\rho)^m\|x\|_1.
\]
Letting \(m\to\infty\), we obtain
\[
\|x\|_1
\le
\frac{1}{\rho}\|e\|_1.
\]

Finally,
\[
\|e\|_1
=
\|\pi_Q(Q-P)\|_1
\le
\|Q-P\|_{\mathrm{row},1}
\le
N\|Q-P\|_{\max}.
\]
Therefore,
\[
\|\pi_Q-\pi_P\|_1
\le
\frac{1}{\rho}\|Q-P\|_{\mathrm{row},1}
\le
\frac{N}{\rho}\|Q-P\|_{\max}.
\]
\end{proof}

\begin{theorem}\label{thm:finite_sample_density}
Suppose the concentration assumption of Section~3 holds, together with positive cell mass,
the partition approximation assumption above, and the regularisation-bias assumption.
Then, for every \(0<\varepsilon<1\),
\[
\mathbb P\!\left(
\|E_N\widehat\pi_{n,\rho}-h_\delta\|_{L^1}
>
\frac{N(1-\rho)}{\rho}\,\varepsilon
+\zeta_N(\rho)
+C_A\Delta_N^\beta
\right)
\le
C_0N^2\exp\!\left(-c_0 n\pi_{\min,N}^2\varepsilon^2\right).
\]
\end{theorem}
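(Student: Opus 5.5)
The plan is a triangle-inequality decomposition through the two intermediate densities $E_N\pi_{N,\rho}$ and $h_N=E_N\pi_N$:
\[
\|E_N\widehat\pi_{n,\rho}-h_\delta\|_{L^1}
\le
\|E_N\widehat\pi_{n,\rho}-E_N\pi_{N,\rho}\|_{L^1}
+\|E_N\pi_{N,\rho}-h_N\|_{L^1}
+\|h_N-h_\delta\|_{L^1}.
\]
The second term is exactly $\zeta_N(\rho)$ by the definition in the regularisation-bias assumption. The third term is bounded deterministically by $C_A\Delta_N^\beta$ via the partition approximation assumption. Here I will use its second (equivalent) form, where $\pi_N(i)=\int_{I_i}h_\delta$ and $h_N=E_N\pi_N$. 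So the only random term is the first, and the proof reduces to a high-probability bound for it.

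For the first term, I will use that $E_N$ is an isometry from $(\mathbb R^N,\|\cdot\|_1)$ into $L^1(I)$. Indeed, for any $v$,
\[
\|E_Nv\|_{L^1}=\sum_i \frac{|v_i|}{m(I_i)}m(I_i)=\|v\|_1 .
\]
Hence the first term equals $\|\widehat\pi_{n,\rho}-\pi_{N,\rho}\|_1$. I then apply the preceding Proposition with
\[
\widetilde P=P_N,\qquad \widetilde Q=\widehat P_n .
\]
Both are row-stochastic; for $\widehat P_n$ this holds by the uniform-row convention. The Proposition gives
\[
\|\widehat\pi_{n,\rho}-\pi_{N,\rho}\|_1\le \frac{N}{\rho}\|\widehat P_{n,\rho}-P_{N,\rho}\|_{\max}.
\]

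On the event $\mathcal G=\{\|\widehat P_{n,\rho}-P_{N,\rho}\|_{\max}\le(1-\rho)\varepsilon\}$, the first term is therefore at most $N(1-\rho)\varepsilon/\rho$. By the second statement of Lemma~\ref{lem:transition_concentration}, $\mathcal G^c$ has probability at most $C_0N^2\exp(-c_0n\pi_{\min,N}^2\varepsilon^2)$. On $\mathcal G$ the total error is at most the stated threshold. Thus the event in the theorem is contained in $\mathcal G^c$, which gives the claim.

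The main obstacle is consistency of the objects rather than any estimate. The theorem compares with $h_\delta$, so $\pi_N$ must simultaneously be the stationary law of the binned chain and the vector of cell masses of $h_\delta$. I will argue this identification directly: under the stationary regime, $\mathbb P(Y_t=i)=\int_{I_i}h_\delta$, and this equals $\pi_N(i)$ by uniqueness of the stationary distribution of $P_N$ (Assumption~\ref{ass:concentration}). This makes $h_N$ in the regularisation-bias term and in the partition assumption the same function. I will also note that $E_N$ here is the embedding $\mathcal E_N$, and that the noise term $\eta(\delta)$ is not needed because we compare with $h_\delta$ rather than $h$.
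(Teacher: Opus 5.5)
Your proposal is correct and follows the paper's proof essentially step for step. Both use the same three-term triangle inequality through $E_N\pi_{N,\rho}$ and $E_N\pi_N$, the isometry of $E_N$, the second statement of Lemma~\ref{lem:transition_concentration}, and the stationary-distribution perturbation proposition with $\widetilde P=P_N$ and $\widetilde Q=\widehat P_n$. Your extra remark identifies the stationary law $\pi_N$ of $P_N$ with the cell masses $\int_{I_i}h_\delta$, so that the $h_N$ in the regularisation-bias term and the one in the partition assumption are the same function; the paper uses this step without stating it.
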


\begin{proof}
By the triangle inequality,
\[
\|E_N\widehat\pi_{n,\rho}-h_\delta\|_{L^1}
\le
\|E_N\widehat\pi_{n,\rho}-E_N\pi_{N,\rho}\|_{L^1}
+\|E_N\pi_{N,\rho}-E_N\pi_N\|_{L^1}
+\|E_N\pi_N-h_\delta\|_{L^1}.
\]
For probability vectors \(u,v\in\mathbb R^N\),
\[
\|E_Nu-E_Nv\|_{L^1}=\|u-v\|_1.
\]
By the empirical transition concentration lemma,
\[
\| \widehat P_{n,\rho}-P_{N,\rho}\|_{\max}
\le (1-\rho)\varepsilon
\]
with probability at least
\[
1-C_0N^2\exp\!\left(-c_0 n\pi_{\min,N}^2\varepsilon^2\right).
\]
On this event, the perturbation proposition yields
\[
\|\widehat\pi_{n,\rho}-\pi_{N,\rho}\|_1
\le
\frac{N(1-\rho)}{\rho}\,\varepsilon.
\]
The second term is \(\zeta_N(\rho)\), and the third is bounded by
\(C_A\Delta_N^\beta\).
\end{proof}

\begin{corollary}
If, in addition, the noisy and noiseless invariant densities satisfy
\[
\|h_\delta-h\|_{L^1}\le \eta(\delta),
\]
then
\[
\mathbb P\!\left(
\|E_N\widehat\pi_{n,\rho}-h\|_{L^1}
>
\frac{N(1-\rho)}{\rho}\,\varepsilon
+\zeta_N(\rho)
+C_A\Delta_N^\beta
+\eta(\delta)
\right)
\le
C_0N^2\exp\!\left(-c_0 n\pi_{\min,N}^2\varepsilon^2\right).
\]
\end{corollary}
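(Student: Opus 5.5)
The plan is to deduce the corollary from Theorem~\ref{thm:finite_sample_density} by a single triangle inequality and an inclusion of events. Abbreviate the bound in the theorem as
\[
R_n := \frac{N(1-\rho)}{\rho}\,\varepsilon+\zeta_N(\rho)+C_A\Delta_N^\beta .
\]
The theorem gives an event \(G_n\) whose complement has probability at most \(C_0N^2\exp(-c_0 n\pi_{\min,N}^2\varepsilon^2)\). On \(G_n\) we have \(\|E_N\widehat\pi_{n,\rho}-h_\delta\|_{L^1}\le R_n\). Concretely, \(G_n\) is the concentration event of Lemma~\ref{lem:transition_concentration}, intersected with nothing else, since the remaining two terms are deterministic.

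On \(G_n\), the triangle inequality in \(L^1(I)\) gives
\[
\|E_N\widehat\pi_{n,\rho}-h\|_{L^1}
\le
\|E_N\widehat\pi_{n,\rho}-h_\delta\|_{L^1}+\|h_\delta-h\|_{L^1}
\le R_n+\eta(\delta).
\]
The second term is bounded deterministically by the added hypothesis, which is Assumption~\ref{ass:noise}. Hence the event
\[
\{\|E_N\widehat\pi_{n,\rho}-h\|_{L^1}>R_n+\eta(\delta)\}
\]
is contained in
\[
\{\|E_N\widehat\pi_{n,\rho}-h_\delta\|_{L^1}>R_n\}.
\]
Monotonicity of \(\mathbb P\) then transfers the probability bound of Theorem~\ref{thm:finite_sample_density} unchanged.

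There is no real obstacle here; the only point needing care is consistency. The density \(h_\delta\) in the partition approximation assumption must be the same invariant density of the observed noisy regime that appears in the noise-stability bound. Also, \(\eta(\delta)\) is nonrandom, so no union bound or extra failure probability is incurred, and the constants \(C_0,c_0\) stay the same.
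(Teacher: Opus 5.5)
Your proposal is correct and matches the argument the paper intends: the paper gives no separate proof of this corollary and treats it as an immediate consequence of Theorem~\ref{thm:finite_sample_density}, obtained from the triangle inequality through \(h_\delta\), the deterministic bound \(\|h_\delta-h\|_{L^1}\le\eta(\delta)\), and monotonicity of \(\mathbb P\) under the resulting event inclusion. Your consistency check is also sound, since in the paper the \(h_\delta\) in the partition-approximation assumption and the one in the noise-stability assumption are the same invariant density of the observed noisy regime.
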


\section{Change Detection via Empirical Invariant Distributions}
\label{sec:changedetection}

We now use the finite-sample approximation result from Section~3 to construct a
change-detection rule.

\paragraph{Uniformity across regimes.}
Throughout this section, \(h_0\) and \(h_t\) denote the invariant densities of the
\emph{observed} baseline and window regimes, respectively. We either assume that all
candidate regimes belong to a class with common constants
\[
C_0,\ c_0,\ \pi_{\ast,N},\ C_A,\ \beta,
\]
or else index the error radii by regime. For notational simplicity, we use the common-constant
version below.

Suppose that an initial baseline segment
\[
    X_0,X_1,\ldots,X_{n_0-1}
\]
is generated under a reference regime with invariant density \(h_0\).
From this segment, we construct the regularized empirical transition matrix
\[
    \widehat P_{0,\rho}
\]
and compute its unique stationary distribution
\[
    \widehat \pi_{0,\rho}.
\]

For a later sliding window of length \(n_t\), starting at time \(t\), we use the observations
\[
    X_t,X_{t+1},\ldots,X_{t+n_t-1}.
\]
From this window, we construct the regularized empirical transition matrix
\[
    \widehat P_{t,\rho}
\]
and compute its unique stationary distribution
\[
    \widehat \pi_{t,\rho}.
\]

The empirical anomaly score is defined by
\[
    S_t
    =
    \|\widehat \pi_{t,\rho}-\widehat \pi_{0,\rho}\|_1.
\]
Equivalently, using the embedding operator \(\mathcal E_N\),
\[
    S_t
    =
    \|\mathcal E_N\widehat \pi_{t,\rho}
    -
    \mathcal E_N\widehat \pi_{0,\rho}\|_{L^1}.
\]
The two expressions are equal because the embedding \(\mathcal E_N\) preserves the \(L^1\)-distance between probability vectors on the partition.

We consider the hypotheses
\[
    H_0: h_t=h_0
    \qquad \text{(no change)}
\]
and
\[
    H_1: h_t\neq h_0
    \qquad \text{(change)}.
\]
Here \(h_t\) denotes the true invariant density associated with the regime generating the sliding window at time \(t\).
For comparison with the noiseless invariant density, define
\[
r_{n,N,\rho,\delta}(\varepsilon)
=
\frac{N(1-\rho)}{\rho}\varepsilon
+
\zeta_N(\rho)
+
C_A\Delta_N^\beta
+
\eta(\delta).
\]
For comparison with the observed noisy invariant density \(h_\delta\), the term
\(\eta(\delta)\) is omitted.
For notational convenience, define the baseline error radius
\[
    r_0(\varepsilon)
    =
    r_{n_0,N,\rho,\delta_0}(\varepsilon),
\]
and the window error radius
\[
    r_t(\varepsilon)
    =
    r_{n_t,N,\rho,\delta_t}(\varepsilon),
\]
where \(n_0\) and \(n_t\) are the baseline and window sample sizes, respectively.

Thus, by Theorem~\ref{thm:finite_sample_density},
\[
    \mathbb P
    \left(
        \|\mathcal E_N\widehat\pi_{0,\rho}-h_0\|_{L^1}
        >
        r_0(\varepsilon)
    \right)
    \leq
    C_0N^2
    \exp\left(
        -c_0 n_0\pi_{*,N}^2\varepsilon^2
    \right),
\]
and
\[
    \mathbb P
    \left(
        \|\mathcal E_N\widehat\pi_{t,\rho}-h_t\|_{L^1}
        >
        r_t(\varepsilon)
    \right)
    \leq
    C_0N^2
    \exp\left(
        -c_0 n_t\pi_{*,N}^2\varepsilon^2
    \right).
\]

\subsection{False-alarm control}

We first prove that the anomaly score remains small under the no-change hypothesis.

\begin{theorem}
\label{thm:false_alarm}
Assume \(H_0\), so that \(h_t=h_0\).
Let
\[
    \tau_t(\varepsilon)
    =
    r_0(\varepsilon)+r_t(\varepsilon).
\]
Then
\[
\mathbb P
\left(
    S_t>\tau_t(\varepsilon)
\right)
\leq
C_0N^2
\exp\left(
    -c_0 n_0\pi_{*,N}^2\varepsilon^2
\right)
+
C_0N^2
\exp\left(
    -c_0 n_t\pi_{*,N}^2\varepsilon^2
\right).
\]
\end{theorem}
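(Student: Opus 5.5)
The plan is a triangle inequality on the event where both empirical densities are close to their targets, followed by a union bound. Since \(\mathcal E_N\) preserves \(L^1\)-distances between probability vectors, I will write
\[
S_t=\|\mathcal E_N\widehat\pi_{t,\rho}-\mathcal E_N\widehat\pi_{0,\rho}\|_{L^1}
\le
\|\mathcal E_N\widehat\pi_{t,\rho}-h_t\|_{L^1}
+\|h_t-h_0\|_{L^1}
+\|h_0-\mathcal E_N\widehat\pi_{0,\rho}\|_{L^1}.
\]
Under \(H_0\) the middle term vanishes because \(h_t=h_0\). Hence
\[
S_t\le \|\mathcal E_N\widehat\pi_{t,\rho}-h_t\|_{L^1}+\|\mathcal E_N\widehat\pi_{0,\rho}-h_0\|_{L^1}.
\]

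Next I define the two bad events
\[
A_0=\{\|\mathcal E_N\widehat\pi_{0,\rho}-h_0\|_{L^1}>r_0(\varepsilon)\},\qquad
A_t=\{\|\mathcal E_N\widehat\pi_{t,\rho}-h_t\|_{L^1}>r_t(\varepsilon)\}.
\]
On the complement of \(A_0\cup A_t\), the displayed bound gives \(S_t\le r_0(\varepsilon)+r_t(\varepsilon)=\tau_t(\varepsilon)\). Therefore \(\{S_t>\tau_t(\varepsilon)\}\subseteq A_0\cup A_t\). A union bound then gives \(\mathbb P(S_t>\tau_t(\varepsilon))\le \mathbb P(A_0)+\mathbb P(A_t)\).

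To bound \(\mathbb P(A_0)\) and \(\mathbb P(A_t)\), I will use the two displayed consequences of Theorem~\ref{thm:finite_sample_density} and its corollary stated just before this subsection. Each is applied with its own sample size \(n_0\) or \(n_t\), and with the common constants \(C_0,c_0,\pi_{*,N}\) guaranteed by the uniformity-across-regimes convention. The bound holds whether the radii include \(\eta(\delta)\), when comparing with the noiseless density through the corollary, or omit it, when comparing with the observed \(h_\delta\) through the theorem. No independence between the baseline segment and the window is needed, since only a union bound is used; this matters because the window may overlap the baseline.

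The only delicate point is making sure the per-regime bounds really apply with the same constants. This requires the window regime under \(H_0\) to satisfy the concentration, positive cell mass, partition and regularisation assumptions with the constants \(C_0,c_0,\pi_{*,N},C_A,\beta\). This is exactly what the common-constant assumption provides, so I will state it explicitly. The remaining steps are routine.
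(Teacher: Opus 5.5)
Your proposal is correct and follows the paper's proof step for step: the triangle inequality through $h_t=h_0$, the inclusion of $\{S_t>\tau_t(\varepsilon)\}$ in the union of the two deviation events (the paper names the complementary good events rather than the bad ones, which is purely cosmetic), the union bound, and Theorem~\ref{thm:finite_sample_density} applied separately to the baseline and the window. Your remarks are accurate and slightly more careful than the paper's write-up: no independence is needed, the common-constant convention is what lets both tail bounds share $C_0,c_0,\pi_{*,N}$, and including or omitting $\eta(\delta)$ in the radii does not affect the argument.
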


\begin{proof}
Under \(H_0\), we have \(h_t=h_0\).
Using the triangle inequality,
\[
\begin{aligned}
S_t
&=
\|\mathcal E_N\widehat \pi_{t,\rho}
-
\mathcal E_N\widehat \pi_{0,\rho}\|_{L^1} \\
&\leq
\|\mathcal E_N\widehat \pi_{t,\rho}-h_t\|_{L^1}
+
\|h_t-h_0\|_{L^1}
+
\|h_0-\mathcal E_N\widehat \pi_{0,\rho}\|_{L^1}.
\end{aligned}
\]
Since \(h_t=h_0\), the middle term is zero. Therefore,
\[
S_t
\leq
\|\mathcal E_N\widehat \pi_{t,\rho}-h_t\|_{L^1}
+
\|\mathcal E_N\widehat \pi_{0,\rho}-h_0\|_{L^1}.
\]

Define the events
\[
    A_0
    =
    \left\{
    \|\mathcal E_N\widehat \pi_{0,\rho}-h_0\|_{L^1}
    \leq
    r_0(\varepsilon)
    \right\},
\]
and
\[
    A_t
    =
    \left\{
    \|\mathcal E_N\widehat \pi_{t,\rho}-h_t\|_{L^1}
    \leq
    r_t(\varepsilon)
    \right\}.
\]
On the event \(A_0\cap A_t\), we have
\[
    S_t
    \leq
    r_0(\varepsilon)+r_t(\varepsilon)
    =
    \tau_t(\varepsilon).
\]
Hence
\[
    \{S_t>\tau_t(\varepsilon)\}
    \subseteq
    A_0^c\cup A_t^c.
\]
By the union bound,
\[
    \mathbb P(S_t>\tau_t(\varepsilon))
    \leq
    \mathbb P(A_0^c)+\mathbb P(A_t^c).
\]
Applying Theorem~\ref{thm:finite_sample_density} to the baseline segment and the sliding window gives the stated bound.
\end{proof}
\begin{corollary}
Let \(\mathcal T\) be a finite set of window starting times. Under the no-change hypothesis for
all \(t\in\mathcal T\),
\[
\mathbb P\!\left(\exists\, t\in\mathcal T:\ S_t>\tau_t(\varepsilon)\right)
\le
\sum_{t\in\mathcal T}
\Bigl[
C_0N^2e^{-c_0n_0\pi_{\ast,N}^2\varepsilon^2}
+
C_0N^2e^{-c_0n_t\pi_{\ast,N}^2\varepsilon^2}
\Bigr].
\]
If all windows have the same length \(n_w\), then
\[
\mathbb P\!\left(\exists\, t\in\mathcal T:\ S_t>\tau_t(\varepsilon)\right)
\le
|\mathcal T|\,C_0N^2
\left(
e^{-c_0n_0\pi_{\ast,N}^2\varepsilon^2}
+
e^{-c_0n_w\pi_{\ast,N}^2\varepsilon^2}
\right).
\]
\end{corollary}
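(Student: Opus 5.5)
The plan is a union bound over the finite set of window starting times, applied on top of Theorem~\ref{thm:false_alarm}. First, the event that some window raises a false alarm is a finite union:
\[
\left\{\exists\, t\in\mathcal T:\ S_t>\tau_t(\varepsilon)\right\}
=
\bigcup_{t\in\mathcal T}\{S_t>\tau_t(\varepsilon)\}.
\]
Subadditivity of \(\mathbb P\) then bounds its probability by \(\sum_{t\in\mathcal T}\mathbb P(S_t>\tau_t(\varepsilon))\). No independence between windows, and none between a window and the baseline, is needed. This matters because the windows overlap and all of them reuse the same baseline estimate \(\widehat\pi_{0,\rho}\).

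Second, for each fixed \(t\in\mathcal T\), the no-change hypothesis gives \(h_t=h_0\). Theorem~\ref{thm:false_alarm} therefore applies to that window and yields
\[
\mathbb P(S_t>\tau_t(\varepsilon))\le C_0N^2e^{-c_0n_0\pi_{\ast,N}^2\varepsilon^2}+C_0N^2e^{-c_0n_t\pi_{\ast,N}^2\varepsilon^2}.
\]
This step uses the common-constant convention stated at the start of Section~\ref{sec:changedetection}: the constants \(C_0,c_0,\pi_{\ast,N}\) are the same for every window. Summing these bounds over \(t\in\mathcal T\) gives the first displayed inequality.

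Third, for the equal-length case, substitute \(n_t=n_w\) for all \(t\). Each summand is then the same quantity, so the sum equals \(|\mathcal T|\) times one bracket. Factoring out \(C_0N^2\) gives the second inequality.

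The only real obstacle is conceptual rather than computational. The baseline-error term is charged once per window, even though the baseline event \(A_0\) is common to all windows. A sharper bound would intersect \(A_0\) once with \(\bigcap_t A_t\) and count \(\mathbb P(A_0^c)\) only once. However, the statement as given only requires the cruder per-window union bound, so I will not pursue that refinement. I also need the stationarity assumptions underlying Theorem~\ref{thm:finite_sample_density} to hold for each window separately. This is exactly what ``no-change for all \(t\in\mathcal T\)'' together with uniformity across regimes provides.
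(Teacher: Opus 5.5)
Your proposal is correct and matches the paper's proof: for each fixed $t\in\mathcal T$ you bound $\mathbb P(S_t>\tau_t(\varepsilon))$ by Theorem~\ref{thm:false_alarm}, write the existence event as a union over $t$, apply the union bound, and then set $n_t=n_w$. Your side remark is also valid: since the baseline event $A_0$ is shared by all windows, the term $\mathbb P(A_0^c)$ could be counted once rather than $|\mathcal T|$ times, giving a sharper bound than the one the statement asks for.
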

\begin{proof}
For each fixed \(t\in\mathcal T\), Theorem~4.1 gives
\[
\mathbb P\!\left(S_t>\tau_t(\varepsilon)\right)
\le
C_0N^2e^{-c_0n_0\pi_{\ast,N}^2\varepsilon^2}
+
C_0N^2e^{-c_0n_t\pi_{\ast,N}^2\varepsilon^2}.
\]
Moreover,
\[
\{\exists\,t\in\mathcal T:\ S_t>\tau_t(\varepsilon)\}
\subseteq
\bigcup_{t\in\mathcal T}
\{S_t>\tau_t(\varepsilon)\}.
\]
Applying the union bound gives the first inequality. If \(n_t=n_w\) for every
\(t\in\mathcal T\), the second inequality follows immediately.
\end{proof}
\subsection{Detection under a true change}

We now show that if the true invariant density in the sliding window differs sufficiently from the baseline density, then the score exceeds the threshold with high probability.

\begin{theorem}
\label{thm:detection}
Assume that
\[
    \|h_t-h_0\|_{L^1}
    >
    2\left(r_0(\varepsilon)+r_t(\varepsilon)\right).
\]
Let
\[
    \tau_t(\varepsilon)
    =
    r_0(\varepsilon)+r_t(\varepsilon).
\]
Then
\[
\mathbb P
\left(
    S_t>\tau_t(\varepsilon)
\right)
\geq
1
-
C_0N^2
\exp\left(
    -c_0 n_0\pi_{*,N}^2\varepsilon^2
\right)
-
C_0N^2
\exp\left(
    -c_0 n_t\pi_{*,N}^2\varepsilon^2
\right).
\]
\end{theorem}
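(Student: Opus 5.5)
The plan is to mirror the proof of Theorem~\ref{thm:false_alarm}, but with the triangle inequality run in the reverse direction: a lower bound on \(S_t\) in place of an upper bound. I will work on the same good events
\[
A_0=\left\{\|\mathcal E_N\widehat\pi_{0,\rho}-h_0\|_{L^1}\le r_0(\varepsilon)\right\},
\qquad
A_t=\left\{\|\mathcal E_N\widehat\pi_{t,\rho}-h_t\|_{L^1}\le r_t(\varepsilon)\right\}.
\]
Applying Theorem~\ref{thm:finite_sample_density} (or its corollary, when \(\eta(\delta)\) is included in the radii) separately to the baseline segment and to the window gives
\[
\mathbb P(A_0^c)\le C_0N^2e^{-c_0n_0\pi_{*,N}^2\varepsilon^2},
\qquad
\mathbb P(A_t^c)\le C_0N^2e^{-c_0n_t\pi_{*,N}^2\varepsilon^2}.
\]
By the union bound, \(\mathbb P(A_0\cap A_t)\) is at least one minus the sum of these two terms.

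The key deterministic step is the reverse triangle inequality. Since \(\mathcal E_N\) preserves \(L^1\)-distances between probability vectors,
\[
S_t=\|\mathcal E_N\widehat\pi_{t,\rho}-\mathcal E_N\widehat\pi_{0,\rho}\|_{L^1}
\ge \|h_t-h_0\|_{L^1}-\|\mathcal E_N\widehat\pi_{t,\rho}-h_t\|_{L^1}-\|\mathcal E_N\widehat\pi_{0,\rho}-h_0\|_{L^1}.
\]
On \(A_0\cap A_t\), the two subtracted terms are bounded by \(r_t(\varepsilon)\) and \(r_0(\varepsilon)\). Combined with the separation hypothesis, this yields
\[
S_t>2\tau_t(\varepsilon)-\tau_t(\varepsilon)=\tau_t(\varepsilon).
\]
Hence \(A_0\cap A_t\subseteq\{S_t>\tau_t(\varepsilon)\}\), and the claimed probability bound follows. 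The factor \(2\) in the hypothesis is more than is needed; a margin of \(\|h_t-h_0\|_{L^1}>2\tau_t(\varepsilon)\) is exactly what gives strict exceedance.

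The main obstacle is not the algebra but the applicability of Theorem~\ref{thm:finite_sample_density} to both segments under \(H_1\). Two points need care.

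First, the window must be generated by a single stationary regime with invariant density \(h_t\). A window that straddles the change point is not covered, so I will state this as part of the setting. Second, the constants \(C_0,c_0,\pi_{*,N},C_A,\beta\) must be valid for both regimes. This is supplied by the uniformity-across-regimes convention adopted at the start of the section, with \(\pi_{*,N}\) a common lower bound for the cell masses.

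I will also be explicit about what \(h_0\) and \(h_t\) mean relative to the radii. If they are the observed noisy densities, then the \(\eta(\delta)\) terms may be dropped, or kept, since keeping them only enlarges the radii. If they are the noiseless densities, the corollary of Theorem~\ref{thm:finite_sample_density} is used instead. Independence of \(A_0\) and \(A_t\) is not needed, since only the union bound is used.
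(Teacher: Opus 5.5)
Your proof is correct and follows the paper's argument essentially step for step. It uses the same good events $A_0,A_t$, the same reverse triangle inequality giving $S_t\ge\|h_t-h_0\|_{L^1}-r_t(\varepsilon)-r_0(\varepsilon)$ on $A_0\cap A_t$, and the same union bound with Theorem~\ref{thm:finite_sample_density} applied to each segment.

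One side remark is off. The factor $2$ is not more than needed; it is exactly what this argument requires. On $A_0\cap A_t$ you only have $S_t\ge\|h_t-h_0\|_{L^1}-\tau_t(\varepsilon)$, and this exceeds $\tau_t(\varepsilon)$ precisely when the separation exceeds $2\tau_t(\varepsilon)$.
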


\begin{proof}
By the reverse triangle inequality,
\[
\begin{aligned}
S_t
&=
\|\mathcal E_N\widehat \pi_{t,\rho}
-
\mathcal E_N\widehat \pi_{0,\rho}\|_{L^1} \\
&\geq
\|h_t-h_0\|_{L^1}
-
\|\mathcal E_N\widehat \pi_{t,\rho}-h_t\|_{L^1}
-
\|\mathcal E_N\widehat \pi_{0,\rho}-h_0\|_{L^1}.
\end{aligned}
\]

On the event \(A_0\cap A_t\), defined as in the proof of Theorem~\ref{thm:false_alarm}, we have
\[
S_t
\geq
\|h_t-h_0\|_{L^1}
-
r_t(\varepsilon)
-
r_0(\varepsilon).
\]
By assumption,
\[
\|h_t-h_0\|_{L^1}
>
2\left(r_0(\varepsilon)+r_t(\varepsilon)\right).
\]
Therefore, on \(A_0\cap A_t\),
\[
S_t
>
r_0(\varepsilon)+r_t(\varepsilon)
=
\tau_t(\varepsilon).
\]
Thus
\[
A_0\cap A_t
\subseteq
\{S_t>\tau_t(\varepsilon)\}.
\]
Consequently,
\[
\mathbb P(S_t>\tau_t(\varepsilon))
\geq
\mathbb P(A_0\cap A_t)
\geq
1-\mathbb P(A_0^c)-\mathbb P(A_t^c).
\]
Using Theorem~\ref{thm:finite_sample_density} for the two error probabilities gives the result.
\end{proof}

\subsection{Asymptotic consistency}

We next record a consistency statement. 
The result says that, if the sample size increases, the partition is refined, the regularization is reduced, and the noise bias vanishes in a controlled way, then false alarms vanish under \(H_0\), while sufficiently separated alternatives are detected with probability tending to one.

\begin{theorem}
\label{thm:consistency}
Let \(n_0,n_t\to\infty\), \(N\to\infty\), \(\rho\to0\), and \(\delta\to0\).
Assume that for a sequence \(\varepsilon_n\downarrow0\),
\[
    N^2
    \exp\left(
        -c_0 n_0\pi_{*,N}^2\varepsilon_n^2
    \right)
    \to0
\]
and
\[
    N^2
    \exp\left(
        -c_0 n_t\pi_{*,N}^2\varepsilon_n^2
    \right)
    \to0.
\]
Assume also that
\[
    r_0(\varepsilon_n)\to0,
    \qquad
    r_t(\varepsilon_n)\to0.
\]
Set
\[
    \tau_t(\varepsilon_n)
    =
    r_0(\varepsilon_n)+r_t(\varepsilon_n).
\]

Then under \(H_0\),
\[
    \mathbb P(S_t>\tau_t(\varepsilon_n))\to0.
\]

Moreover, under \(H_1\), if there exists \(\gamma>0\) such that
\[
    \|h_t-h_0\|_{L^1}\geq \gamma
\]
for all sufficiently large \(n\), then
\[
    \mathbb P(S_t>\tau_t(\varepsilon_n))\to1.
\]
\end{theorem}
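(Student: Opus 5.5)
The argument reduces directly to Theorems~\ref{thm:false_alarm} and~\ref{thm:detection}, applied along the sequence \(\varepsilon=\varepsilon_n\). For each \(n\) the constants \(C_0,c_0,\pi_{*,N}\) are the common constants of the uniform regime class, so the error radii \(r_0(\varepsilon_n)\) and \(r_t(\varepsilon_n)\) and the corresponding tail bounds are available at every stage of the limit.

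\emph{Under \(H_0\).} Theorem~\ref{thm:false_alarm} with \(\varepsilon=\varepsilon_n\) gives
\[
\mathbb P(S_t>\tau_t(\varepsilon_n))\le C_0N^2e^{-c_0n_0\pi_{*,N}^2\varepsilon_n^2}+C_0N^2e^{-c_0n_t\pi_{*,N}^2\varepsilon_n^2}.
\]
Since \(C_0\) is a fixed constant, both terms tend to zero by the two exponential hypotheses. This settles the first claim. Note that the condition \(r_0,r_t\to0\) is not needed for this part.

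\emph{Under \(H_1\).} We are given \(\|h_t-h_0\|_{L^1}\ge\gamma\) for all large \(n\), and \(r_0(\varepsilon_n)+r_t(\varepsilon_n)\to0\). Hence there is \(n^*\) such that, for \(n\ge n^*\),
\[
2\bigl(r_0(\varepsilon_n)+r_t(\varepsilon_n)\bigr)<\gamma\le\|h_t-h_0\|_{L^1}.
\]
This is exactly the separation hypothesis of Theorem~\ref{thm:detection}. For such \(n\), that theorem gives
\[
\mathbb P(S_t>\tau_t(\varepsilon_n))\ge 1-C_0N^2e^{-c_0n_0\pi_{*,N}^2\varepsilon_n^2}-C_0N^2e^{-c_0n_t\pi_{*,N}^2\varepsilon_n^2},
\]
and the right side tends to \(1\).

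The main obstacle is bookkeeping rather than analysis. Theorem~\ref{thm:finite_sample_density} requires \(0<\varepsilon<1\), so I would note that \(\varepsilon_n\downarrow0\) gives \(\varepsilon_n<1\) eventually. I would also note that \(h_0\) and \(h_t\) (and \(\delta_0,\delta_t\)) may depend on \(n\). The uniform-constant convention declared at the start of this section handles this, and the \(H_1\) hypothesis is stated for a lower bound \(\gamma\) that does not depend on \(n\). Finally, the hypothesis \(r_0,r_t\to0\) implicitly asserts \(\zeta_N(\rho)\to0\) along the joint limit, together with \(N\varepsilon_n/\rho\to0\), \(\Delta_N\to0\) and \(\eta(\delta)\to0\). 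I would take this as given rather than derive it, because Assumption~\ref{ass:regularization_bias} only provides \(\zeta_N(\rho)\to0\) for fixed \(N\).
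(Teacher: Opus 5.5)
Your proposal is correct and matches the paper's proof: the $H_0$ claim follows from the bound of Theorem~\ref{thm:false_alarm} at $\varepsilon=\varepsilon_n$, and the $H_1$ claim follows by taking $n$ large enough that $2(r_0(\varepsilon_n)+r_t(\varepsilon_n))<\gamma$ and then invoking Theorem~\ref{thm:detection}. Your bookkeeping remarks are accurate and slightly sharpen what the paper writes: $\varepsilon_n<1$ holds eventually, the hypothesis $r_0,r_t\to0$ is not used under $H_0$, and that hypothesis must be taken as given because Assumption~\ref{ass:regularization_bias} controls $\zeta_N(\rho)$ only for fixed $N$.
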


\begin{proof}
The false-alarm statement follows directly from Theorem~\ref{thm:false_alarm}. 
The assumed exponential conditions imply that the upper bound for
\[
    \mathbb P(S_t>\tau_t(\varepsilon_n))
\]
goes to zero under \(H_0\).

For the detection statement, since
\[
    r_0(\varepsilon_n)+r_t(\varepsilon_n)\to0,
\]
there exists \(n\) sufficiently large such that
\[
    2\left(r_0(\varepsilon_n)+r_t(\varepsilon_n)\right)
    <
    \gamma.
\]
Since \(\|h_t-h_0\|_{L^1}\geq \gamma\), the condition of Theorem~\ref{thm:detection} holds for all sufficiently large \(n\). 
The probability lower bound in Theorem~\ref{thm:detection} tends to one by the exponential assumptions. 
Therefore,
\[
    \mathbb P(S_t>\tau_t(\varepsilon_n))\to1.
\]
\end{proof}

\subsection{Practical threshold calibration}

The theoretical threshold
\[
    \tau_t(\varepsilon)=r_0(\varepsilon)+r_t(\varepsilon)
\]
depends on constants such as \(C_0,c_0,\pi_{\ast,N},C_A,\beta,\zeta_N(\rho)\),
and the noise modulus \(\eta(\delta)\). 
These constants are generally unknown in applications.

Therefore, in numerical experiments we distinguish between the theoretical threshold and an empirical threshold. 
In practice, the calibration windows should be taken from a period believed to be governed by the baseline regime. 
If anomalous windows are included in the calibration set, the empirical threshold may be inflated and the detector may become less sensitive.
A practical threshold is
\[
    \tau_{\mathrm{emp}}
    =
    \operatorname{Quantile}_{1-\alpha}
    \{S_t:\text{ calibration windows from the baseline regime}\}.
\]
For example, choosing the \(90\%\) or \(95\%\) quantile gives a data-driven threshold.

The theoretical results justify the form of the score and show that, under suitable assumptions, scores remain small under no change. 
The empirical threshold provides a practical way to implement the method when the constants in the finite-sample bound are unknown.
\section{Operator-Theoretic Interpretation}
\label{sec:operator}

The previous sections analyze change detection through empirical stationary distributions obtained from finite transition matrices. 
We now explain how this finite-dimensional procedure relates to perturbations of transfer operators.

Let \(P_0\) and \(P_t\) denote the transfer operators associated with the baseline regime and the regime generating the sliding window at time \(t\), respectively. 
Let \(h_0\) and \(h_t\) be their invariant densities:
\[
    P_0 h_0 = h_0,
    \qquad
    P_t h_t = h_t.
\]
The anomaly score introduced above is designed to estimate
\[
    \|h_t-h_0\|_{L^1}.
\]
Thus, the method detects changes in the invariant density. Such changes may be induced
by changes in the underlying transfer operator, but a transfer-operator change need not
change the invariant density.

\subsection{Perturbation of invariant densities}

The following result gives a standard sufficient condition under which a small perturbation of the transfer operator leads to a small change in the invariant density.

\begin{theorem}
\label{thm:operator_perturbation}
[Operator perturbation principle under Keller--Liverani hypotheses]
Let \(B\) be a Banach space continuously embedded in \(L^1(I)\), and assume that the closed
unit ball of \(B\) is relatively compact in \(L^1(I)\), or that an equivalent quasi-compactness
hypothesis holds. Let \(P_0:B\to B\) and \(P_t:B\to B\) be Markov operators satisfying a
uniform Lasota--Yorke inequality
\[
\|P_k^n f\|_B \le A\alpha^n\|f\|_B + C\|f\|_{L^1},
\qquad k\in\{0,t\},
\]
for some \(A,C>0\) and \(\alpha\in(0,1)\). Assume further that
\[
\|P_t-P_0\|_{B\to L^1}\to 0
\]
and that \(1\) is a simple isolated eigenvalue of \(P_0\), corresponding to a unique
invariant density \(h_0\in B\).

Then, for \(t\) sufficiently small, \(P_t\) has a unique invariant density \(h_t\in B\) near
\(h_0\), and
\[
\|h_t-h_0\|_{L^1}\to 0
\qquad\text{as}\qquad
\|P_t-P_0\|_{B\to L^1}\to 0.
\]
More quantitative moduli of continuity are available under the full Keller--Liverani
hypotheses.
\end{theorem}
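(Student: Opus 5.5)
The plan is to invoke the Keller--Liverani spectral stability theorem \cite{keller_liverani_1999} and then extract the statement about invariant densities from spectral projections. The setting is the family \(\{P_0,P_t\}\) acting on \(B\). The uniform Lasota--Yorke inequality, together with the compact embedding of the unit ball of \(B\) into \(L^1(I)\), gives two facts. First, by Hennion's theorem, each \(P_k\) is quasi-compact on \(B\) with essential spectral radius at most \(\alpha\). Second, the resolvents \((z-P_t)^{-1}\) are uniformly bounded in \(B\) for \(z\) outside a neighbourhood of the spectrum of \(P_0\) and outside the disk of radius \(\alpha\). This uniform resolvent bound is the content of Keller--Liverani, and it uses the smallness of the triple norm \(\|P_t-P_0\|_{B\to L^1}\). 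The Markov property gives two further facts. The spectral radius in \(L^1\) is at most \(1\). The functional \(f\mapsto\int f\) is preserved, so \(1\) lies in the spectrum of each \(P_t\), with the Lebesgue integral as a left eigenvector.

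The key steps, in order, are as follows.

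\emph{Step 1.} Fix a small circle \(\Gamma\) around \(1\) that separates \(1\) from the rest of \(\sigma(P_0)\). This is possible because \(1\) is simple and isolated.

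\emph{Step 2.} Define the spectral projections
\[
\Pi_k=\frac{1}{2\pi i}\oint_\Gamma (z-P_k)^{-1}\,dz .
\]
Keller--Liverani gives \(\|\Pi_t-\Pi_0\|_{B\to L^1}\to0\) as \(\|P_t-P_0\|_{B\to L^1}\to0\). It also gives that the rank of \(\Pi_t\) equals the rank of \(\Pi_0\), which is \(1\), for small perturbations.

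\emph{Step 3.} The enclosed spectrum of \(P_t\) is a single simple eigenvalue \(\lambda_t\). Since \(1\in\sigma(P_t)\) and \(1\) lies inside \(\Gamma\), we get \(\lambda_t=1\). Positivity of \(P_t\) then lets us choose the eigenfunction nonnegative, and we normalise it to a density \(h_t\in B\). This density is unique because the eigenvalue is simple.

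\emph{Step 4.} Set \(h_t=\Pi_t h_0/\int\Pi_t h_0\). The denominator is \(\int \Pi_t h_0\to\int\Pi_0h_0=1\). Then
\[
\|h_t-h_0\|_{L^1}\le \frac{\|(\Pi_t-\Pi_0)h_0\|_{L^1}+|1-\int\Pi_t h_0|}{\int\Pi_t h_0}\lesssim \|\Pi_t-\Pi_0\|_{B\to L^1}\|h_0\|_B\to0 .
\]

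The main obstacle is Step 2. The perturbation is small only in the weak \(B\to L^1\) norm, not in operator norm on \(B\), so standard analytic perturbation theory does not apply. I would not reprove it. Instead I would check that our hypotheses match those of \cite{keller_liverani_1999}: a uniform Lasota--Yorke inequality, a compact embedding (equivalently, quasi-compactness), and a uniform \(L^1\) bound on the \(P_k\), which holds automatically for Markov operators. Their Theorem 1 then supplies the uniform resolvent bound and the Hölder-type estimate for \(\Pi_t-\Pi_0\). The mentioned ``more quantitative moduli'' come directly from that estimate, of order \(\|P_t-P_0\|^{\eta}\) with some \(\eta\in(0,1)\). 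A minor second point is Step 3, the positivity argument. I would handle it by noting that \(\Pi_t\) is a limit of Cesàro averages of \(P_t^n\), which are positive operators. Hence \(\Pi_t\) is positive and maps densities to nonnegative functions.
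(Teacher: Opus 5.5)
Your proposal is correct and follows the paper's route. The paper's proof is just an appeal to the Keller--Liverani perturbation theorem: the uniform Lasota--Yorke inequality plus compactness control the essential spectral radius, and smallness of \(\|P_t-P_0\|_{B\to L^1}\) preserves the rank-one spectral data near \(1\). You supply the details the paper leaves implicit (the perturbed eigenvalue is exactly \(1\) by conservation of \(\int f\), \(\Pi_t\) is positive via Ces\`aro averages, and the \(L^1\) bound on \(h_t-h_0\) follows from \(\|\Pi_t-\Pi_0\|_{B\to L^1}\)), and all of these are sound.
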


\begin{proof}
This is an application of the Keller--Liverani perturbation theorem in the standard
two-norm setting. The uniform Lasota--Yorke inequality and the compactness/quasi-compactness
hypothesis give control of the essential spectral radius, while the small mixed-norm
perturbation preserves the isolated spectral data near the eigenvalue \(1\).
\end{proof}

\begin{remark}
Theorem~\ref{thm:operator_perturbation} is not a new perturbation theorem. 
It records the operator-theoretic mechanism behind the proposed detection statistic. 
The novelty of the present work is the use of empirical finite-dimensional approximations of these invariant densities for change detection with finite-sample control.
\end{remark}

\subsection{ Connection with the empirical score}

The empirical score
\[
    S_t
    =
    \|\widehat\pi_{t,\rho}-\widehat\pi_{0,\rho}\|_1
\]
is a finite-dimensional approximation of
\[
    \|h_t-h_0\|_{L^1}.
\]
The finite-sample results in Section~\ref{sec:finite_sample} quantify the error between the empirical stationary distributions and the corresponding true invariant densities. 
The perturbation result above explains why a change in the transfer operator can lead to a detectable change in invariant density.

The empirical score should therefore be interpreted as follows:

The empirical score
\[
S_t=\|\widehat\pi_{t,\rho}-\widehat\pi_{0,\rho}\|_1
\]
is a finite-dimensional approximation of \(\|h_t-h_0\|_{L^1}\). Therefore the method
detects those changes in dynamics that alter the invariant density. A change in the
transfer operator may, but need not, change the invariant density.
\section{Algorithmic Implementation}
\label{sec:algorithm}

We now describe the practical implementation of the proposed change-detection method. 
The algorithm takes as input a one-dimensional time series and returns anomaly scores for sliding windows.

\subsection{Input}

Let
\[
    X_{\min}=\min_{0\leq t<M}X_t,
    \qquad
    X_{\max}=\max_{0\leq t<M}X_t.
\]
If \(X_{\max}>X_{\min}\), we normalize by
\[
    \widetilde X_t
    =
    \frac{X_t-X_{\min}}{X_{\max}-X_{\min}}.
\]
If \(X_{\max}=X_{\min}\), the time series is constant on the observed segment and no state-space normalization is needed. 
In that degenerate case, the anomaly score is identically zero unless later observations leave the constant regime.

After normalization, we write \(X_t\in[0,1]\) for simplicity.
\begin{remark}
The statistic \(S_t=\|\widehat\pi_{t,\rho}-\widehat\pi_{0,\rho}\|_1\) compares stationary
distributions. It is therefore designed to detect changes in stationary behaviour. It may
fail to detect a change in transition dynamics if the baseline and window regimes have the
same invariant distribution.

If one wants to detect changes in the transition mechanism itself, one should additionally
monitor an operator-distance statistic such as
\[
S_t^{P}=\|\widehat P_{t,\rho}-\widehat P_{0,\rho}\|_{F}
\qquad\text{or}\qquad
S_t^{\max}=\|\widehat P_{t,\rho}-\widehat P_{0,\rho}\|_{\max}.
\]
A complete transition-change detector would require a separate finite-sample analysis for
this matrix-valued statistic.
\end{remark}

\subsection{Partition of the state space}

Choose a positive integer \(N\) and partition the interval \([0,1]\) into \(N\) intervals:
\[
    I_i=\left[\frac{i-1}{N},\frac{i}{N}\right),
    \qquad i=1,\ldots,N-1,
\]
and
\[
    I_N=\left[\frac{N-1}{N},1\right].
\]
The mesh size is
\[
    \Delta_N=\frac{1}{N}.
\]

Each observation \(X_t\) is assigned to a bin:
\[
    Y_t=i
    \quad\Longleftrightarrow\quad
    X_t\in I_i.
\]

\subsection{Empirical transition matrix}

For a data segment
\[
    X_a,X_{a+1},\ldots,X_b,
\]
with corresponding bin sequence
\[
    Y_a,Y_{a+1},\ldots,Y_b,
\]
we define transition counts
\[
    C_{ij}
    =
    \sum_{t=a}^{b-1}
    \mathbf 1_{\{Y_t=i,\;Y_{t+1}=j\}},
\]
and departure counts
\[
    C_i
    =
    \sum_{t=a}^{b-1}
    \mathbf 1_{\{Y_t=i\}}.
\]
The empirical transition matrix \(\widehat P\) is
\[
    \widehat P(i,j)
    =
    \begin{cases}
        C_{ij}/C_i, & C_i>0, \\[0.8em]
        1/N, & C_i=0.
    \end{cases}
\]
Thus every row of \(\widehat P\) sums to one.

To guarantee a unique stationary distribution, we use the regularized matrix
\[
    \widehat P_\rho
    =
    (1-\rho)\widehat P
    +
    \rho J_N,
    \qquad
    J_N=\frac{\mathbf 1\mathbf 1^\top}{N},
\]
where \(0<\rho<1\).

\subsection{Stationary distribution computation}

The empirical invariant distribution \(\widehat\pi_\rho\) is the stationary distribution of \(\widehat P_\rho\):
\[
    \widehat\pi_\rho^\top \widehat P_\rho
    =
    \widehat\pi_\rho^\top,
    \qquad
    \widehat\pi_\rho\geq0,
    \qquad
    \|\widehat\pi_\rho\|_1=1.
\]
In practice, it can be computed by power iteration.

Starting from the uniform distribution
\[
    \pi^{(0)}=\frac{1}{N}(1,\ldots,1),
\]
iterate
\[
    \pi^{(k+1)}=\pi^{(k)}\widehat P_\rho
\]
until
\[
    \|\pi^{(k+1)}-\pi^{(k)}\|_1
\]
is below a chosen tolerance.

\subsection{Baseline and sliding-window scores}

Choose a baseline length \(n_0\). 
Using the baseline segment
\[
    X_0,\ldots,X_{n_0-1},
\]
construct the regularized empirical transition matrix \(\widehat P_{0,\rho}\) and compute its stationary distribution \(\widehat\pi_{0,\rho}\).

Next, choose a sliding-window length \(n_w\) and a step size \(s\). 
For each window starting at time \(t\), use the segment
\[
    X_t,\ldots,X_{t+n_w-1}
\]
to construct \(\widehat P_{t,\rho}\) and compute \(\widehat\pi_{t,\rho}\).

The anomaly score is
\[
    S_t
    =
    \|\widehat\pi_{t,\rho}-\widehat\pi_{0,\rho}\|_1.
\]
A large value of \(S_t\) indicates that the long-term stationary behavior in the current window differs from the baseline behavior.

\subsection{Thresholding}

The theoretical threshold is
\[
    \tau_t(\varepsilon)
    =
    r_0(\varepsilon)+r_t(\varepsilon),
\]
where the terms are defined in Section~\ref{sec:changedetection}. 
In applications, the constants appearing in this expression are generally unknown. 
Therefore, we also use a data-driven threshold obtained from a calibration set:
\[
    \tau_{\mathrm{emp}}
    =
    \operatorname{Quantile}_{1-\alpha}\{S_t:\text{ calibration windows}\}.
\]
A window is declared anomalous if
\[
    S_t>\tau_{\mathrm{emp}}.
\]

\subsection{Algorithm}

\begin{center}
\begin{minipage}{0.95\textwidth}
\hrule
\vspace{0.5em}
\noindent\textbf{Algorithm 1. Empirical transfer-operator change detection}
\vspace{0.5em}
\hrule
\vspace{0.8em}

\begin{algorithmic}[1]
\Require Time series \(X_0,\ldots,X_{M-1}\), number of bins \(N\), baseline size \(n_0\), window size \(n_w\), step size \(s\), regularization parameter \(\rho\), threshold level \(\alpha\).
\Ensure Anomaly scores \(S_t\) and detected windows.

\State Normalize the data to \([0,1]\), if necessary.
\State Partition \([0,1]\) into \(N\) intervals.
\State Convert observations \(X_t\) to bin labels \(Y_t\).
\State Estimate the baseline transition matrix \(\widehat P_0\) from \(Y_0,\ldots,Y_{n_0-1}\).
\State Regularize \(\widehat P_{0,\rho}=(1-\rho)\widehat P_0+\rho J_N\).
\State Compute the baseline stationary distribution \(\widehat\pi_{0,\rho}\).

\For{\(t=n_0,n_0+s,n_0+2s,\ldots,M-n_w\)}
    \State Estimate the window transition matrix \(\widehat P_t\).
    \State Regularize \(\widehat P_{t,\rho}=(1-\rho)\widehat P_t+\rho J_N\).
    \State Compute \(\widehat\pi_{t,\rho}\).
    \State Compute \(S_t=\|\widehat\pi_{t,\rho}-\widehat\pi_{0,\rho}\|_1\).
\EndFor

\State Compute empirical threshold \(\tau_{\mathrm{emp}}\).
\State Declare window \(t\) anomalous if \(S_t>\tau_{\mathrm{emp}}\).
\State \Return anomaly scores and detected windows.
\end{algorithmic}

\vspace{0.5em}
\hrule
\end{minipage}
\end{center}
\begin{remark}
The score \(S_t=\|\widehat\pi_{t,\rho}-\widehat\pi_{0,\rho}\|_1\) compares empirical
stationary distributions. It is therefore designed to detect changes in stationary behaviour.
It may fail to detect a change in transition dynamics if the baseline and window regimes
have the same invariant distribution. Detecting such transition-mechanism changes would
require an additional statistic, such as
\[
S_t^P=\|\widehat P_{t,\rho}-\widehat P_{0,\rho}\|_F
\quad\text{or}\quad
S_t^{\max}=\|\widehat P_{t,\rho}-\widehat P_{0,\rho}\|_{\max},
\]
together with a separate finite-sample analysis.
\end{remark}
\section{Experimental Validation}\label{sec:experiments}

The purpose of this section is to validate the empirical detector
\[
S_t=\|\widehat\pi_{t,\rho}-\widehat\pi_{0,\rho}\|_1
\]
defined in Sections~\ref{sec:changedetection} and~\ref{sec:algorithm}, where
\(\widehat\pi_{0,\rho}\) is the regularised empirical stationary distribution computed from a
baseline segment and \(\widehat\pi_{t,\rho}\) is the corresponding quantity computed from a
sliding window starting at time \(t\). The experiments are designed to assess detection of
\emph{changes in invariant density or stationary behaviour}. They are not intended to claim
detection of every possible change in transition dynamics. This distinction is consistent with
the operator-theoretic interpretation in Section~\ref{sec:operator} and with perturbation results
of Keller--Liverani type for invariant densities under suitable hypotheses
\cite{keller_liverani_1999}. Since Ulam-type discretisations can exhibit slow convergence rates
in general, we do not fix a single partition size in isolation; instead we include a sensitivity
study over the number of bins \(N\) \cite{bose_murray_2001}.

\subsection{Experimental goals}\label{subsec:exp-goals}

The experimental programme has three goals.

First, we evaluate the empirical false-alarm behaviour of the detector under a genuine
no-change regime. This directly reflects the theoretical role of the threshold in Section~4.

Second, we evaluate whether the detector identifies a genuine change in invariant density in
a synthetic change-point experiment, using a model class that is consistent with the main
theorem class of noisy expanding maps.

Third, we study sensitivity with respect to the numerical parameters of the procedure:
the number of bins \(N\), the baseline length \(n_0\), the sliding-window length \(n_w\), and
the Doeblin regularisation parameter \(\rho\). This is necessary because the theoretical
error decomposition in Section~3 contains a discretisation term, a regularisation term,
and a sampling term, all of which depend on these choices.

\subsection{Synthetic data-generating processes}\label{subsec:exp-dgp}

\paragraph{Core model: noisy beta maps.}
Our main synthetic model is the noisy beta map
\[
X_{k+1} = \bigl(\beta X_k + \delta \xi_k\bigr) \pmod{1},
\qquad
\xi_k \stackrel{\mathrm{i.i.d.}}{\sim} \mathcal N(0,1),
\]
on the state space \([0,1]\). Unless otherwise stated, the default parameters are
\[
\beta_{\mathrm{pre}} = 2.0,
\qquad
\beta_{\mathrm{post}} = 1.7,
\qquad
\delta = 0.02,
\qquad
T = 40000,
\qquad
t_\star = 20000,
\qquad
B = 5000.
\]
Here \(B\) denotes the burn-in length and \(t_\star\) is the true change point.

We consider two regimes.

\begin{enumerate}
\item \textbf{Null regime.}
There is no change:
\[
\beta_{\mathrm{pre}} = \beta_{\mathrm{post}} = 2.0.
\]
This experiment is used to estimate hold-out false-alarm behaviour.

\item \textbf{Alternative regime.}
There is a genuine change:
\[
\beta_{\mathrm{pre}} = 2.0,
\qquad
\beta_{\mathrm{post}} = 1.7.
\]
This experiment is used to estimate detection performance.
\end{enumerate}

\paragraph{Restart and continuation designs.}
Two data-generation designs are relevant.

\begin{enumerate}
\item \textbf{Restart design, default.}
The pre-change and post-change segments are generated separately after burn-in and then
concatenated:
\[
X^{\mathrm{pre}}_0,\ldots,X^{\mathrm{pre}}_{t_\star-1},
\qquad
X^{\mathrm{post}}_0,\ldots,X^{\mathrm{post}}_{T-t_\star-1}.
\]
The observed path is
\[
X_0,\ldots,X_{T-1}
=
X^{\mathrm{pre}}_0,\ldots,X^{\mathrm{pre}}_{t_\star-1},
X^{\mathrm{post}}_0,\ldots,X^{\mathrm{post}}_{T-t_\star-1}.
\]
This design isolates the change in stationary regime and is therefore the default for the
main paper.

\item \textbf{Continuation design, optional robustness check.}
A single trajectory is generated, and at time \(t_\star\) the map parameter is changed while
the current state is retained:
\[
X_{k+1}
=
\begin{cases}
(\beta_{\mathrm{pre}} X_k + \delta \xi_k)\pmod{1}, & k < t_\star,\\[0.4em]
(\beta_{\mathrm{post}} X_k + \delta \xi_k)\pmod{1}, & k \ge t_\star.
\end{cases}
\]
This design is more sensitive to transient carry-over from the pre-change state and should
be reported separately from the default experiment.
\end{enumerate}

For Monte Carlo replication \(r\in\{0,\ldots,R-1\}\), the default seed is \(r\). In the restart
design we use seed \(r\) for the pre-change segment and seed \(r+1\) for the post-change
segment, matching the supplied implementation. In the continuation design a single random
number generator with seed \(r\) is used throughout the whole trajectory.

\paragraph{Optional robustness model: noisy logistic map.}
As an optional out-of-class robustness check, one may replace the post-change regime by the
noisy logistic map
\[
X_{k+1}
=
\bigl(4X_k(1-X_k) + \delta \xi_k\bigr)\pmod{1}.
\]
Because the deterministic logistic map at parameter \(4\) is not uniformly expanding on
\([0,1]\), this experiment should be presented only as a robustness study and not as the main
empirical validation of the expanding-map theory.

\subsection{Implementation details}\label{subsec:exp-implementation}

The detector is implemented exactly as described in Section~\ref{sec:algorithm}. The state
space \([0,1]\) is partitioned into \(N\) equal-width bins
\[
I_i = \left[\frac{i-1}{N},\frac{i}{N}\right),
\qquad i=1,\ldots,N-1,
\qquad
I_N = \left[\frac{N-1}{N},1\right].
\]
Unless otherwise stated, the default choice is
\[
N = 100.
\]

Given a binned trajectory \(Y_0,\ldots,Y_{m-1}\in\{0,\ldots,N-1\}\), the empirical transition
matrix \(\widehat P\) is computed from observed transition counts. If a state is never observed
as a departure state in the relevant segment, its row is replaced by the uniform distribution
on \(\{0,\ldots,N-1\}\). The regularised matrix is then
\[
\widehat P_{\rho}
=
(1-\rho)\widehat P + \rho J_N,
\qquad
J_N = \frac{1}{N}\mathbf 1 \mathbf 1^\top.
\]
The default regularisation level is
\[
\rho = 10^{-4}.
\]
This value matches the change-point code used for the synthetic detection experiments and
provides a numerically stable strictly positive matrix. Since very small \(\rho\) can increase
sensitivity to finite-sample perturbations, we also examine a sensitivity grid over \(\rho\).

The empirical stationary distribution \(\widehat\pi_{\rho}\) is computed by power iteration,
initialised from the uniform distribution,
\[
\pi^{(0)}=\frac{1}{N}(1,\ldots,1),
\qquad
\pi^{(k+1)}=\pi^{(k)}\widehat P_\rho,
\]
until
\[
\|\pi^{(k+1)}-\pi^{(k)}\|_1 < 10^{-13},
\]
with a maximum of \(200000\) iterations. These stopping rules match the supplied script.

The baseline and scanning parameters are
\[
n_0 = 5000,
\qquad
n_w = 1000,
\qquad
s = 100,
\qquad
\alpha = 0.05
\]
by default. The baseline distribution \(\widehat\pi_{0,\rho}\) is computed from
\[
X_0,\ldots,X_{n_0-1}.
\]
The score time series is then computed for window starts
\[
t=n_0,n_0+s,n_0+2s,\ldots,T-n_w,
\]
via
\[
S_t=\|\widehat\pi_{t,\rho}-\widehat\pi_{0,\rho}\|_1.
\]

\paragraph{Calibration and hold-out null evaluation.}
A crucial point is that the windows used to calibrate the empirical threshold must be distinct
from the windows used to estimate false alarms. Let
\[
\mathcal T_{\mathrm{pre}}
=
\{t:\ n_0 \le t \le T-n_w,\ t+n_w<t_\star,\ t\equiv n_0 \pmod s\}
\]
be the set of fully pre-change window starts. Write these starts as
\[
t_1 < t_2 < \cdots < t_m.
\]
Let
\[
g = \left\lceil \frac{n_w}{s} \right\rceil
\]
be a one-window separation gap measured in scan steps, and let
\[
k = \lfloor 0.7m \rfloor.
\]
We define
\[
\mathcal T_{\mathrm{cal}} = \{t_1,\ldots,t_{k-g}\},
\qquad
\mathcal T_{\mathrm{null}} = \{t_{k+1},\ldots,t_m\},
\]
so that calibration and null evaluation are separated by at least one full window length.
For the default setting this gives
\[
|\mathcal T_{\mathrm{cal}}|=88,
\qquad
|\mathcal T_{\mathrm{null}}|=42.
\]
The empirical threshold for replication \(r\) is
\[
\tau_{\mathrm{emp}}^{(r)}
=
\operatorname{Quantile}_{1-\alpha}
\{S_t^{(r)}:\ t\in\mathcal T_{\mathrm{cal}}\}.
\]
False alarms are then evaluated only on \(\mathcal T_{\mathrm{null}}\), not on
\(\mathcal T_{\mathrm{cal}}\).

\paragraph{Windows excluded from evaluation.}
Windows that overlap the true change point,
\[
t<t_\star<t+n_w,
\]
contain both regimes and are excluded from the FAR and TPR summaries. Post-change
evaluation starts at the first fully post-change window:
\[
\mathcal T_{\mathrm{alt}}
=
\{t:\ t\ge t_\star,\ t\equiv n_0 \pmod s,\ t\le T-n_w\}.
\]
For the default setting,
\[
|\mathcal T_{\mathrm{alt}}|=191.
\]

\subsection{Evaluation metrics and Monte Carlo protocol}\label{subsec:exp-metrics}

Unless otherwise stated, the main-text experiments use
\[
R=100
\]
independent Monte Carlo replications.

For each replication \(r\), the hold-out false-alarm rate is
\[
\mathrm{FAR}^{(r)}
=
\frac{1}{|\mathcal T_{\mathrm{null}}|}
\sum_{t\in\mathcal T_{\mathrm{null}}}
\mathbf 1\{S_t^{(r)}>\tau_{\mathrm{emp}}^{(r)}\}.
\]
The family-wise false-alarm indicator is
\[
\mathrm{FWER}^{(r)}
=
\mathbf 1\!\left\{\exists\, t\in\mathcal T_{\mathrm{null}}:\ 
S_t^{(r)}>\tau_{\mathrm{emp}}^{(r)}\right\}.
\]
The true-positive rate is
\[
\mathrm{TPR}^{(r)}
=
\frac{1}{|\mathcal T_{\mathrm{alt}}|}
\sum_{t\in\mathcal T_{\mathrm{alt}}}
\mathbf 1\{S_t^{(r)}>\tau_{\mathrm{emp}}^{(r)}\}.
\]
The first detection delay is
\[
\mathrm{Delay}^{(r)}
=
\min\{t-t_\star:\ t\in\mathcal T_{\mathrm{alt}},\ S_t^{(r)}>\tau_{\mathrm{emp}}^{(r)}\},
\]
with the convention that \(\mathrm{Delay}^{(r)}\) is recorded as missing if no post-change
window is detected in replication \(r\).

For each reported metric \(M\), we present
\[
\overline M \pm \operatorname{sd}(M)
\]
over the \(R\) replications. For delay, the mean and standard deviation are computed over
replications with at least one post-change alarm, and the number of missed detections is
reported separately.

\subsection{Default beta-map results}\label{subsec:exp-default-results}

Table~\ref{tab:default-mc} reports the default-setting Monte Carlo results. Under the null
setting, TPR and detection delay are not meaningful because there is no true change point.
They are therefore omitted.

\begin{table}[htbp]
\centering
\small
\caption{Default-setting Monte Carlo results for the synthetic beta-map experiments.
Values are reported as mean \(\pm\) standard deviation over \(R=100\) replications.
For the null setting, TPR and delay are not applicable.}
\label{tab:default-mc}
\begin{tabular}{lcccc}
\hline
Setting & FAR & FWER & TPR & Delay \\
\hline
Null: \(\beta_{\mathrm{pre}}=\beta_{\mathrm{post}}=2.0\)
&
\(0.0798\pm0.1056\)
&
\(0.6600\pm0.4761\)
&
--
&
--
\\
Alternative: \(\beta_{\mathrm{pre}}=2.0,\ \beta_{\mathrm{post}}=1.7\)
&
\(0.0798\pm0.1056\)
&
\(0.6600\pm0.4761\)
&
\(0.8782\pm0.1463\)
&
\(75.00\pm281.90\)
\\
\hline
\end{tabular}
\end{table}

In the default alternative experiment, the detector achieved a mean true-positive rate of
\[
0.8782\pm0.1463,
\]
with no missed detections over \(100\) Monte Carlo replications. The mean detection delay was
\[
75.00\pm281.90
\]
time steps. This supports the use of \(S_t\) for detecting changes that alter the stationary
distribution of the system.

The hold-out pointwise false-alarm rate was
\[
0.0798\pm0.1056.
\]
The family-wise false-alarm rate over the scanned hold-out windows was
\[
0.6600\pm0.4761.
\]
This larger family-wise value is expected because many overlapping windows are scanned using
a pointwise empirical threshold. Thus, the empirical quantile threshold should be interpreted
as a pointwise threshold, not as a family-wise error-control procedure. This observation is
consistent with the multiple-window union-bound discussion in Section~4.

\subsection{Sensitivity study}\label{subsec:exp-sensitivity}

The default setting is
\[
(N,n_0,n_w,s,\rho,\alpha)
=
(100,5000,1000,100,10^{-4},0.05).
\]
To assess robustness, we perform one-factor-at-a-time sweeps around the default setting:
\begin{align*}
N &\in \{50,100,200\},\\
n_w &\in \{500,1000,2000\},\\
\rho &\in \{10^{-6},10^{-5},10^{-4},10^{-3}\},\\
n_0 &\in \{2500,5000,10000\}.
\end{align*}
In each sweep, the remaining parameters are held fixed at their default values.

\begin{table}[htbp]
\centering
\small
\caption{One-factor-at-a-time sensitivity sweep for the alternative beta-map
change-point experiment. All parameters not shown in the first column are kept at their
default values. Values are reported as mean \(\pm\) standard deviation over \(R=100\)
replications.}
\label{tab:sensitivity-mc}
\begin{tabular}{lcccc}
\hline
Setting & FAR & FWER & TPR & Delay \\
\hline
\(N=50\)
&
\(0.0767\pm0.1046\)
&
\(0.6000\pm0.4924\)
&
\(0.9423\pm0.1126\)
&
\(27.00\pm139.88\)
\\
\(N=100\)
&
\(0.0798\pm0.1056\)
&
\(0.6600\pm0.4761\)
&
\(0.8782\pm0.1463\)
&
\(75.00\pm281.90\)
\\
\(N=200\)
&
\(0.0786\pm0.1031\)
&
\(0.6900\pm0.4648\)
&
\(0.7528\pm0.1930\)
&
\(190.00\pm542.25\)
\\
\hline
\(n_w=500\)
&
\(0.0632\pm0.0696\)
&
\(0.7100\pm0.4560\)
&
\(0.5160\pm0.1739\)
&
\(236.00\pm459.36\)
\\
\(n_w=1000\)
&
\(0.0798\pm0.1056\)
&
\(0.6600\pm0.4761\)
&
\(0.8782\pm0.1463\)
&
\(75.00\pm281.90\)
\\
\(n_w=2000\)
&
\(0.1103\pm0.1808\)
&
\(0.5200\pm0.5021\)
&
\(0.9964\pm0.0131\)
&
\(5.00\pm50.00\)
\\
\hline
\(\rho=10^{-6}\)
&
\(0.0798\pm0.1056\)
&
\(0.6600\pm0.4761\)
&
\(0.8781\pm0.1463\)
&
\(75.00\pm281.90\)
\\
\(\rho=10^{-5}\)
&
\(0.0798\pm0.1056\)
&
\(0.6600\pm0.4761\)
&
\(0.8781\pm0.1463\)
&
\(75.00\pm281.90\)
\\
\(\rho=10^{-4}\)
&
\(0.0798\pm0.1056\)
&
\(0.6600\pm0.4761\)
&
\(0.8782\pm0.1463\)
&
\(75.00\pm281.90\)
\\
\(\rho=10^{-3}\)
&
\(0.0800\pm0.1056\)
&
\(0.6600\pm0.4761\)
&
\(0.8785\pm0.1460\)
&
\(75.00\pm281.90\)
\\
\hline
\(n_0=2500\)
&
\(0.0678\pm0.0895\)
&
\(0.6400\pm0.4824\)
&
\(0.8244\pm0.1847\)
&
\(98.00\pm458.14\)
\\
\(n_0=5000\)
&
\(0.0798\pm0.1056\)
&
\(0.6600\pm0.4761\)
&
\(0.8782\pm0.1463\)
&
\(75.00\pm281.90\)
\\
\(n_0=10000\)
&
\(0.0771\pm0.1222\)
&
\(0.5100\pm0.5024\)
&
\(0.9094\pm0.1314\)
&
\(91.00\pm380.35\)
\\
\hline
\end{tabular}
\end{table}

The sensitivity analysis shows the expected bias--variance tradeoff. Increasing the number
of bins from \(N=50\) to \(N=200\) reduces detection performance: the mean TPR decreases
from \(0.9423\) to \(0.7528\), and the mean delay increases from \(27.00\) to \(190.00\).
This is consistent with increased transition-count sparsity for fixed window length.

Increasing the window length from \(n_w=500\) to \(n_w=2000\) improves detection power:
the mean TPR increases from \(0.5160\) to \(0.9964\). This reflects the reduction in sampling
error from longer windows, although in general longer windows reduce temporal localisation.
In the present restart-design experiment, the first fully post-change windows are sufficiently
separated from the change point that the larger window length does not increase the reported
delay.

The results are nearly unchanged across
\[
\rho\in\{10^{-6},10^{-5},10^{-4},10^{-3}\},
\]
indicating that the detector is not sensitive to the regularisation parameter in this range.
Increasing the baseline length from \(n_0=2500\) to \(n_0=10000\) improves TPR from
\(0.8244\) to \(0.9094\), although the delay variability remains substantial.

\subsection{Representative score plot}\label{subsec:score-plot}

Figure~\ref{fig:score-series} displays a representative score time series from the default
alternative experiment. The plot should show the empirical threshold, the true change point,
and the calibration and hold-out null intervals.

\begin{figure}[htbp]
\centering
\includegraphics[width=0.92\linewidth]{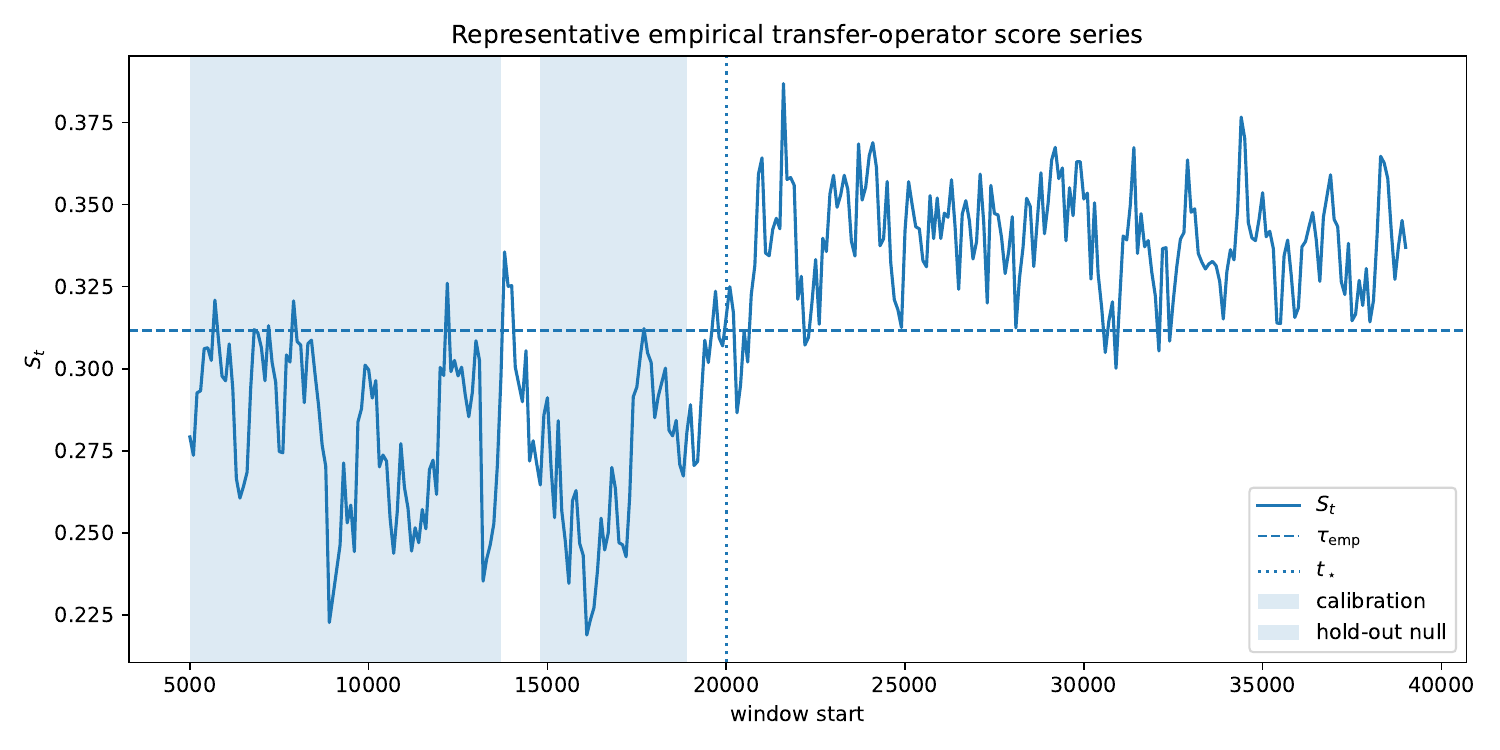}
\caption{Representative score series \(t\mapsto S_t\) for one default alternative
replication of the noisy beta-map change-point experiment. The plot includes the empirical
threshold \(\tau_{\mathrm{emp}}\), the true change point \(t_\star\), and visual indications of
the calibration interval and the hold-out null interval.}
\label{fig:score-series}
\end{figure}

\subsection{Interpretation and limitations}\label{subsec:exp-interpretation}

The correct interpretation of these experiments is the following. A high TPR together with a
moderate pointwise hold-out FAR indicates that the score \(S_t\) is able to detect a change in
\emph{stationary behaviour}. Conversely, weak detection does not by itself imply that the
transition law has not changed. It may instead indicate that the baseline and post-change
regimes have similar invariant densities at the chosen noise level and resolution, or that the
available window length is too small relative to the discretisation.

For this reason, operator-distance summaries such as
\[
\|\widehat P_{t,\rho}-\widehat P_{0,\rho}\|_F
\]
should be treated, if reported at all, as auxiliary diagnostics rather than as the primary
validation of the main detector. The main detector is built from stationary distributions, not
from an operator norm.

The experiments therefore validate the proposed method as a detector of changes in invariant
density or stationary behaviour for noisy one-dimensional maps. They do not establish
detection of all possible transfer-operator perturbations, because two different transition
mechanisms may share the same invariant density.
\section{Conclusion}\label{sec:conclusion}

We have developed a partition-based empirical transfer-operator approach for detecting
changes in the stationary behaviour of one-dimensional noisy dynamical systems. The method
uses observations from a baseline segment to estimate a regularised empirical transition
matrix and its stationary distribution. For each subsequent sliding window, a corresponding
empirical stationary distribution is computed, and the score
\[
S_t=\|\widehat\pi_{t,\rho}-\widehat\pi_{0,\rho}\|_1
\]
is used to measure deviation from the baseline regime.

The main theoretical contribution is a finite-sample approximation bound for the empirical
stationary density associated with the regularised finite-state model. Under assumptions on
finite-state concentration, positive cell mass, partition approximation of the observed
invariant density, regularisation bias, and noise stability, the bound separates the effects of
sampling error, regularisation, partition approximation, and noise. This leads to a
single-window false-alarm guarantee and a sufficient detection condition when the invariant
density of the window regime is separated from that of the baseline regime by more than the
combined estimation error. We also included a multiple-window union-bound corollary to
clarify how false-alarm probabilities accumulate when many windows are scanned.

The numerical experiments on noisy beta-map change-point data support the theoretical
interpretation of the statistic. In the default alternative experiment, the detector achieved a
high true-positive rate with no missed detections over the Monte Carlo replications, while
the sensitivity study showed the expected tradeoff between partition resolution and sampling
variability. The experiments also showed that a pointwise empirical threshold does not
automatically provide family-wise control over many scanned windows, which is consistent
with the multiple-window discussion in the theory.

The scope of the method should be interpreted carefully. The statistic \(S_t\) compares
stationary distributions and is therefore designed to detect changes in invariant density or
stationary behaviour. It does not, by itself, detect all possible changes in transition dynamics.
In particular, two different transition mechanisms may share the same invariant density, in
which case the proposed score may remain small. Detecting such changes would require an
additional operator-level statistic, such as a matrix-norm comparison of empirical transition
matrices, together with a separate finite-sample analysis.

Several extensions remain for future work. One direction is to develop finite-sample theory
for operator-level change statistics that can detect changes in transition structure even when
the invariant density is unchanged. Another direction is to replace the pointwise empirical
threshold by a threshold calibrated for the maximum score over a collection of windows, in
order to obtain stronger family-wise false-alarm control in practical scanning procedures.
Finally, the method should be tested on real time-series data with externally validated
change points, while clearly distinguishing exploratory detections from ground-truth
performance claims.
\appendix
\section{Reproducibility Details}
The complete Python script used to generate Tables~\ref{tab:default-mc}
and~\ref{tab:sensitivity-mc}, as well as Figure~\ref{fig:score-series}, is provided
in the supplementary material.

\end{document}